%% file: main.tex
\documentclass{article}
\usepackage[nonatbib, preprint]{neurips_2026}

\usepackage[utf8]{inputenc} %
\usepackage[T1]{fontenc}    %
\usepackage{amsfonts}

\usepackage{blindtext}
\usepackage{amsmath}
\usepackage{mathtools}
\usepackage{amsbsy}
\usepackage{amssymb}
\usepackage[dvipsnames, table]{xcolor}
\usepackage{graphicx}
\usepackage[numbers,sort,compress]{natbib}
\usepackage{enumerate}
\usepackage{caption}
\usepackage{xurl}
\usepackage[breaklinks=true]{hyperref}
\usepackage{bm}
\usepackage{wrapfig}

\usepackage{algorithm}
\usepackage{algpseudocode}
\usepackage{enumitem}
\usepackage{booktabs}
\usepackage{subcaption}
\usepackage{tikz}
\usetikzlibrary{shapes,decorations,arrows,calc,arrows.meta,fit,positioning, backgrounds}
\tikzset{
    -Latex,auto,node distance =1 cm and 1 cm,semithick,
    state/.style ={ellipse, draw, minimum width = 0.7 cm},
    point/.style = {circle, draw, inner sep=0.04cm,fill,node contents={}},
    bidirected/.style={Latex-Latex,dashed},
    el/.style = {inner sep=2pt, align=left, sloped}
}
\usepackage{chemformula}
\usepackage{tablefootnote}
\usepackage{adjustbox}
\usepackage{multirow}
\usepackage{tocloft}

\usepackage{amsthm}
\newtheorem{theorem}{Theorem}
\newtheorem{lemma}[theorem]{Lemma}

\newtheorem{proposition}[theorem]{Proposition}
\newtheorem{definition}[theorem]{Definition}
\newtheorem{example}[theorem]{Example}

\newtheorem{remark}[theorem]{Remark}

\newtheorem{assumption}{Assumption}

\usepackage{cleveref}
\crefname{figure}{Fig.}{Figs.}
\crefname{definition}{Defn.}{Defns.}
\crefname{corollary}{Corollary}{Corollaries}
\crefname{proposition}{Prop.}{Props.}
\crefname{theorem}{Thm.}{Thms.}
\crefname{remark}{Remark}{Remarks}
\crefname{lemma}{Lemma}{Lemmata}
\crefname{claim}{Claim}{Claims}
\crefname{table}{Table}{Tables}
\crefname{section}{\S}{\S\S}
\crefname{subsection}{\S}{\S\S}
\crefname{subsubsection}{\S}{\S\S}
\crefname{assumption}{Assm.}{Assms.}
\crefname{appendix}{Appx.}{Appxs.}
\crefname{equation}{Eq.}{Eqs.}
\crefname{example}{Ex.}{Exs.}

\newcommand{\appendixtocname}{Appendix contents}
\newlistof{appendices}{app}{\appendixtocname}

\newcommand{\appsection}[1]{%
  \section{#1}%
  \addcontentsline{app}{section}{\protect\numberline{\thesection}#1}%
}

\definecolor{mygray}{gray}{0.5}
\definecolor{myblue}{RGB}{87,144,252}
\definecolor{myorange}{RGB}{248,156,32}
\definecolor{mygreen}{rgb}{0.133, 0.545, 0.133}
\definecolor{mypurple}{rgb}{0.58, 0.34, 0.92}

\newcommand{\Env}{\mathcal{E}}
\newcommand{\Envtr}{\mathcal{E}_\mathrm{tr}}

\newcommand{\Rlead}{R^{\mathrm{lead}}}
\newcommand{\Vfoll}{V^{\mathrm{foll}}}

\newcommand{\G}{\mathcal{G}}

\newcommand{\C}{\mathbb{C}}

\newcommand{\DEYforb}{\DE(Y)_\mathrm{forb}}
\newcommand{\SB}{\mathrm{SB}}
\newcommand{\MB}{\mathrm{MB}}
\DeclareMathOperator{\CH}{CH}
\DeclareMathOperator{\PA}{PA}

\DeclareMathOperator{\ND}{ND}
\DeclareMathOperator{\DE}{DE}
\newcommand{\SBg}[1]{\mathrm{SB}_{#1}(Y)}
\newcommand{\DEforbg}[1]{\mathrm{DE}_{#1}(Y)_{\mathrm{forb}}}

\newcommand{\Q}{\mathbb{Q}}
\renewcommand{\P}{\mathbb{P}}
\newcommand{\E}{\mathbb{E}}
\DeclareMathOperator*{\argmin}{argmin}
\DeclareMathOperator*{\argmax}{argmax}
\DeclareMathOperator{\Var}{Var}

\newcommand{\indep}{\mbox{${}\perp\mkern-9mu\perp{}$}}
\newcommand{\notindep}{\mbox{${}\not\!\perp\mkern-9mu\perp{}$}}
\newcommand{\given}{\,|\,}
\newcommand{\dsep}[1]{\mathrel{\perp_{#1}}}

\title{Prediction-Intervention Games and Invariant Sets}

\author{%
    Linus K\"uhne \\
    Seminar for Statistics and ETH AI Center\\
    ETH Zurich\\
    Zurich, Switzerland \\
    \texttt{linus.kuehne@ai.ethz.ch} \\
    \And
    Felix Schur \\
    Seminar for Statistics\\
    ETH Zurich\\
    Zurich, Switzerland \\
    \texttt{felix.schur@stat.math.ethz.ch} \\
    \And
    Jonas Peters \\
    Seminar for Statistics\\
    ETH Zurich\\
    Zurich, Switzerland \\
    \texttt{jonas.peters@stat.math.ethz.ch}
  }

\begin{document}

\maketitle

\begin{abstract}
We consider the following two-player game: using observational data, the leader chooses a prediction function for a response variable $Y$ from given covariates. The follower then reacts with an intervention on some covariates in the underlying structural causal model to maximize their own objective. The leader knows the intervention targets, but may have limited knowledge of the follower's objective. We call this setup a prediction-intervention game, a special case of a Stackelberg game. Finding an optimal strategy for the leader is generally difficult. To avoid severe performance loss, the leader may base their prediction on the causal parents of $Y$, or more generally on an invariant subset of covariates. We prove,
for two common classes of follower objectives,
that predictors based on the stable blanket, a specific invariant subset, 
are always better or as good as
those based on the causal parents.
We further upper bound the leader's post-intervention risk by a worst-case risk over allowed interventions and strengthen existing distribution generalization results to analyze this bound: we give sufficient conditions under which stable-blanket predictors are worst-case optimal, and show by examples that these conditions cannot in general be dropped. Finally, we discuss practical strategies for settings with known and unknown graph, and test them on simulated and real-world data.
\end{abstract}

\input{body}

\end{document}

%% file: body.tex
\definecolor{softsage}{RGB}{190,214,224}
\definecolor{softrose}{RGB}{232,199,190}

\section{Introduction} \label{sec:intro}

We consider a two-player Stackelberg game for predicting a real-valued or binary
response $Y \in \mathcal{Y}$ from covariates $X \in \mathcal{X}$: 
based on  a training distribution (or data),
a leader commits to a prediction function $f$, and a follower (an individual subject to the prediction) observes or queries $f$ and intervenes on some covariates to optimize their own, possibly non-adversarial, objective. The test distribution thus emerges only after the
prediction function has been deployed, and can depend on that
function. In particular, training and test distributions need not coincide.

\paragraph{Example.}
Consider an insurer who prices policies with a function $f$ predicting 
the probability that an individual will submit a claim.
Suppose that $f$ uses two covariates that are correlated with $Y$,
but only one is causal: the car's number of safety features
and the car's color.
An individual $i$ with access to $f$ (such as through querying a website for a quote)
may change their covariates from $X_i$ to $\tilde{X}_i$ (yielding  $\tilde{Y}_i$) to minimize
$f(\tilde{X}_i)$ and obtain a smaller premium. Installing safety features changes both the predicted and true probability of filing a claim: $f(\tilde{X}_i) \neq f({X_i})$ and $\P(\tilde{Y}_i = 1) \neq \P(Y_i = 1)$. 
Choosing a car with a different color %
may instead yield $f(\tilde{X}_i) \neq f({X_i})$ 
without changing the probability of filing a claim: 
$\P(\tilde{Y}_i = 1) = \P(Y_i = 1)$.
A similar phenomenon occurs in usage-based car insurance, where \citet{leeRoleMonitoringEffect2025} show that temporarily monitoring drivers to price policies changes their behavior during that period, worsening prediction performance. 

\paragraph{Prediction-intervention games.}
We formalize the problem as a causal version of a Stackelberg game \citep{stackelbergMarktformGleichgewicht1934}, which we call a prediction-intervention game.
The leader provides a prediction
function~$f$, and the follower reacts by choosing an intervention $e^*(f)$
in a structural causal model (SCM) that maximizes their objective:
$e^*(f) \in \argmax_{e} \Vfoll_e(f)$. We model the allowed intervention targets as the children of an intervention node in the causal graph. Knowing the causal structure helps the leader design a prediction function
that is robust under the follower's interventions, that is, find $\argmin_{f} \Rlead_{e^*(f)}(f)$. 
In practice, the leader estimates $\Rlead_{e}$ from data and may have only partial knowledge about the follower's objective and possible interventions.

\paragraph{Related work.}
Our work relates to strategic
classification \citep{hardtStrategicClassification2016, brucknerStackelbergGamesAdversarial2011},
its causal extensions \citep{miller2020strategic, shavitCausalStrategicLinear2022},   
formulations with asymmetric leader/follower
objectives \citep{ehrenbergAdversariesIncentivesStrategic2024}, and
performative prediction \citep{perdomo2020performative}.
\citet{horowitzCausalStrategicClassification2023}
study binary strategic classification where covariates are
split into
causal and
noncausal predictive ones.
Their follower's best response is a hard do-intervention on all observed covariates.
Unlike existing work, we allow 
for more general follower
interventions that target pre-specified nodes, may add edges to the causal graph, and we allow for
flexible leader and follower objectives. Further, we focus on predictors based on different invariant subsets of covariates, 
connecting
prediction-intervention games to causal distribution
generalization \citep{petersCausalInferenceUsing2015,rojas-carullaInvariantModelsCausal2018, christiansenCausalFrameworkDistribution2021};
in particular, we analyze predictions based on the stable blanket
\citep{pfisterStabilizingVariableSelection2021}. 
We provide and analyze a tight bound that builds a connection to worst-case distribution generalization \citep{Kuhn_Shafiee_Wiesemann_2025, sagawa_groupDRO, freni2025maximum},
and
extend and strengthen existing results for our setting. To our knowledge, the combination of more general SCM-constrained follower interventions, flexible leader/follower objectives, and worst-case analysis of predictors based on stable blankets is novel. \Cref{app:extended_related_works} gives further details on the relation to existing work.

\paragraph{Contributions.}
(i) We formally introduce prediction-intervention games and show how the
intervention view connects to modeling a group of self-optimizing individuals
(\Cref{sec:problem_setup}).
(ii) For two common classes of follower objectives, we argue that predicting based on causal parents is reasonable
but generally suboptimal for the leader: using the
stable blanket is often better and never worse (\Cref{sec:strategiesinvsets}).
(iii) The worst-case risk over the possible interventions upper-bounds the leader's post-intervention risk and does not require knowledge of the follower’s objective. This connects to
distribution generalization and worst-case prediction. We strengthen existing
results to give sufficient conditions under which predictors based on the stable blanket 
are worst-case optimal, and 
prove
that in general these conditions cannot  be dropped (\Cref{sec:optimality}).
(iv) We propose 
learning methods that the leader can use when they do
not know the causal graph (\Cref{sec:learning_subset_predictors}).
(v) We test our methodology on simulated and real data (\Cref{sec:experiments}). 
The source code to reproduce all experiments is available under \href{https://github.com/LinusKuehne/prediction-intervention-games}{\texttt{github.com/LinusKuehne/prediction-intervention-games}}.

\section{
Prediction-intervention games
} \label{sec:problem_setup}

\paragraph{The game.}
We assume that the leader's task is to perform prediction; in this paper, we
consider both regression and binary classification.
Let $Y \in \mathcal{Y}$ be a binary or continuous response with covariates $X = (X_1, \ldots, X_d) \in \mathcal{X}$.
We use $\mathcal{Y} \coloneqq \{0,1\}$ and $\mathcal{A} \coloneqq [0,1]$ for classification, and $\mathcal{Y} \coloneqq \mathbb{R}$ and $\mathcal{A} \coloneqq \mathbb{R}$ for regression. The training data are generated by a structural causal model
\citep[SCM;][]{pearlCausalityModelsReasoning2009, bongersFoundationsStructuralCausal2021}
$\C_0$ over $(X, Y, E)$ with induced distribution $\P^0$ and DAG $\G_0$, in which $E$ is a source node indexing environments. Each $e \in \Env_0 \coloneqq \mathrm{supp}_{\C_0}(E)$
corresponds to a distribution $\P^0_e \coloneq \P^0(\cdot \given E=e)$ over $(X, Y)$. The leader has access
to one or more training distributions $\P^0_e$ (or i.i.d.\ samples thereof) for $e \in \Envtr \subseteq \Env_0$, and chooses a
prediction function $f \colon \mathcal{X} \to \mathcal{A}$. We assume that
$\G_0$ is known unless stated otherwise. After observing $f$, the follower reacts by strategically choosing an
environment $e \in \Env$ that maximizes a utility $\Vfoll_e(f)$. Here,
$\Env$ 
is a set of
environments indexing distributions $\P_e$ over
$(X, Y)$ 
that can occur after deployment of $f$; 
these environments correspond to intervention distributions
and we even allow for 
$\Env \supsetneq \Env_0$
(below, we provide more details on the follower's interventions). After the follower has chosen $e \in \Env$, the leader incurs risk $\Rlead_e(f)$. This defines a prediction-intervention game:\footnote{Instead of the
supremum, one could also average $\Rlead_e(f)$ over the follower's best
responses $e \in \Env^*(f)$.}
\begin{equation} \label{eq:pred_intervention_game}
    f^* \in \argmin_{f \colon \mathcal{X} \to \mathcal{A}}\;
    \sup_{e \in \Env^*(f)} \Rlead_e(f),
    \qquad
    \Env^*(f) \coloneqq \argmax_{e \in \Env} \Vfoll_e(f).
\end{equation}
We assume throughout that $\Env^*(f) \neq \varnothing$. We detail $\Rlead$, $\Vfoll$, and the construction of $\Env$ below.

\paragraph{Notation.}
For a node $j$ in a DAG $\mathcal{H}$, let $\PA_{\mathcal{H}}(j), \CH_{\mathcal{H}}(j), \DE_{\mathcal{H}}(j), \ND_{\mathcal{H}}(j)$ denote its parents, children, descendants, and non-descendants (all excluding $j$), respectively. We drop the subscript if it is clear from the context. 
We sometimes write $k \to j$ as a shorthand for an edge $X_k \to X_j$. For
$S \subseteq \{1, \ldots, d\}$, define $X_S \coloneqq (X_j)_{j \in S}$. Expectations under $\P^0_e$ and $\P_e$ are denoted by $\E^0_e$ and~$\E_e$.

\paragraph{The leader's evaluation loss.}
We assume that the leader evaluates a predictor $f \colon \mathcal{X} \to \mathcal{A}$
using a loss $\ell \colon \mathcal{Y} \times \mathcal{A} \to [0, \infty]$ strictly consistent for the mean functional \citep{gneitingMakingEvaluatingPoint2011}: for every distribution $F$ on $\mathcal{Y}$ with finite mean $\E_F[Y]$ and for which the expectations below are finite, and for all $q \in \mathcal{A}$, $\E_F[\ell(Y, \E_F[Y])] \leq \E_F[\ell(Y, q)]$, with equality iff $q = \E_F[Y]$. Examples include the squared loss $\ell(y, q) = (y-q)^2$ for regression and, for binary classification, any strictly proper scoring rule such as the
Brier score $\ell(y, q) = (y-q)^2$ or the
log loss \citep{gneitingStrictlyProperScoring2007}. Strict consistency ensures that the risk $\Rlead_e(f) \coloneqq \E_e[\ell(Y, f(X))]$ is uniquely minimized by $f(x) = \E_e[Y \given X = x]$ (thus, it plays a similar role to the orthogonality of the conditional expectation, but extends to other loss functions). We assume throughout that $Y$ has finite mean under each $\P_e$ and that all risks in our results are finite.

\paragraph{Follower's objective.}
The follower's objective need not coincide with the leader's: while the
leader evaluates $f$ by $\Rlead_e(f)$ and prefers low risk, the follower selects $e \in \Env$ to maximize a separate utility $\Vfoll_e(f) \in \mathbb{R}$. For instance, the leader may use $\Rlead_e(f) = \E_e[(Y-f(X))^2]$, while the follower maximizes an objective such as $\E_e[Y - f(X)]$ or $\E_e[f(X)]$.

\paragraph{Follower's interventions.} 
The follower reacts to $f$ by strategically intervening on the children of
$E$ in $\C_0$, possibly adding new edges to $\G_0$. This yields a
deployment SCM $\C$ over $(X, Y, E)$ that extends $\C_0$, with induced
distribution $\P$ and DAG $\G$. Each $e \in \Env \coloneqq
\mathrm{supp}_{\C}(E)$ yields $\P_e \coloneqq \P(\cdot \given E = e)$
over $(X, Y)$. The source node $E$ indicates the potential intervention targets: an edge $E \to X_j$ in $\G_0$ means the follower may intervene on
the mechanism of $X_j$. We assume no edge $E \to Y$. Compared to training, for each $j \in \CH_{\G_0}(E)$, the follower may let $X_j$ depend on $X_{\PA_{\G_0}(j)}$ and on additional variables from an action set $A_j \subseteq \{1, \ldots, d\}$ of permitted new parents; we write
$A \coloneqq (A_1, \ldots, A_d)$.
Formally, $\C$ is obtained from $\C_0$ by replacing, for each
$j \in \CH_{\G_0}(E)$, the structural assignment of $X_j$ with a
measurable function of $X_{\PA_{\G_0}(j) \cup S_j}$ for some
$S_j \subseteq A_j$, plus independent exogenous noise. All other structural assignments are unchanged. The deployment DAG $\G$ is the DAG
of $\C$, which we assume to be acyclic. Choosing $S_j = \varnothing$ for
all $j$ together with the original assignments recovers $\C_0$, so
$\Envtr \subseteq \Env_0 \subseteq \Env$ and $\P_e = \P^0_e$ for
$e \in \Env_0$. Larger $A_j$ allow for more flexible interventions.

\paragraph{Unknown graph.}
In \Cref{sec:learning_subset_predictors}, we drop the assumption that the graph
is known to the leader. In that case, 
we assume access to training data 
from several training environments $\Envtr \subseteq \Env_0$ with $|\Envtr| > 1$. 
For each $e \in \Envtr$, we observe $n_e$ i.i.d.\ draws $(X_i, Y_i)_{i=1}^{n_e} \sim \P_e^0$.

As far as we know, the formulation~\eqref{eq:pred_intervention_game} is novel and determining 
its value 
$\sup_{e \in \Env^*(f^*)} \Rlead_e(f^*)$ 
is relevant. 
Prediction-intervention games are special instances of bilevel optimization problems \citep{zhangIntroductionBilevelOptimization2024}, which are known to be hard to solve in general \citep{bolteGeometricComputationalHardness2026}.
In \Cref{sec:strategiesinvsets} and~\Cref{sec:optimality}
we 
make first steps by connecting the problem to invariant sets and distribution generalization, respectively, and strengthening and applying results from these fields.

\subsection{Individuals and population interventions} \label{sec:individuals}
We can also take the perspective of an individual follower who optimizes a more personalized objective rather than a population quantity. In the following, all graph quantities are taken with respect to $\G_0$.
The follower may observe a realization of some vector $O \subseteq \ND(\CH(E))$ 
and is allowed to
choose deterministic values 
for $\CH(E)$, taking into account the observation $X_O=x_O$. For simplicity, let us assume that for all $j \in \CH(E)$, we have 
$\PA(j) \subseteq O \cup \CH(E)$, so the follower has access to all parents of $j$ when setting $j$'s value. 
Let us further assume that the follower maximizes a utility of the form 
$\Vfoll_e(f) = \E_e[k(f,X,Y) \given X_O=x_O]$.
We now consider a solution 
$(c_j)_{j \in \CH(E)}$ maximizing the individual follower's objective 
\begin{equation} \label{eq:individual}
    \E_{(c_j)_{j \in \CH(E)}}[k(f,(X_O,X_{\CH(E)},X_{\mathrm{rest}}),Y)\given X_O=x_O],
\end{equation}
where the leftmost subscript %
indicates that the  $X_{\CH(E)}$ are intervened on and set to $c_j$, $j \in \CH(E)$. 
The optimal constants may vary with $x_O$, so there are functions 
$(g_j)_{j \in \CH(E)}$
such that for all $x_O$, they maximize
$
\E_{(g_j)_{j \in \CH(E)}}[k(f,(X_O,X_{\CH(E)},X_{\mathrm{rest}}),Y) \given X_O=x_O]$;
here the subscript indicates that the structural assignments of $X_{\CH(E)}$ are 
set to the
$g_j$'s.
Thus, the same $(g_j)_{j \in \CH(E)}$ maximize %
$$
\E_{(g_j)_{j \in \CH(E)}}[k(f,(X_O,X_{\CH(E)},X_{\mathrm{rest}}),Y)],
$$
that is, they solve the problem of a population follower with
$A_j \coloneqq O \setminus \PA(j)$.
Thus, taking the average over many individual followers optimizing~\eqref{eq:individual} has,
from the perspective of the leader and the game's value, the same effect as considering a single population follower in an augmented graph.

\section{Strategies 
based on invariant sets
} 
\label{sec:strategiesinvsets}

A natural strategy is to restrict the predictor $f$ to a subset
of covariates $S \subseteq \{1, \ldots, d\}$.
We first discuss the common choice $S= \PA(Y)$, which is a special case of choosing $f$ based on an invariant set. 
Throughout this section, we work with the augmented SCM $\C$, DAG $\G$, and test family $\Env$ from \Cref{sec:problem_setup}. Quantities such as $\PA(Y)$, $\CH(Y)$, and $\DEYforb$ are computed in $\G$. \Cref{sec:augmentation} relates these to the training-stage DAG $\G_0$. In the remainder of the paper, we write $R_e$ for the leader's risk $\Rlead_e$.

\subsection{Strategies based on causal parents} \label{sec:parents}
Using predictors $f$ based on the causal parents of $Y$ 
is common in causal strategic prediction \citep{yanDiscoveringOptimalScoring2023}
and has the following attractive property:
If the follower's utility depends on the intervention only through the
prediction $f(X)$, then a parent-based predictor can be affected only
by interventions changing the distribution of $X_{\PA(Y)}$.
`Gaming'
in the sense of manipulating descendants of $Y$ to change the prediction of $f$ without inducing a change in $Y$ is therefore impossible. 
Restricting $f$ to $X_{\PA(Y)}$ 
can thus incentivize followers to genuinely improve their outcomes.

Predictors based on $\PA(Y)$ further satisfy a simple robustness guarantee. Since $Y \notin \CH(E)$, the conditional distribution $Y \given X_{\PA(Y)}$ is the same in all $e \in \Env$. We refer to such subsets as invariant (see Def.~\ref{def:inv_subset} below). Thus, 
$f_{\PA(Y)}(x) \coloneqq \E[Y \given X_{\PA(Y)} = x_{\PA(Y)}]$
is environment-independent. Hence, if we restrict $f$ to the parents, the follower can induce a covariate shift in $X_{\PA(Y)}$; the optimal regression function using $X_{\PA(Y)}$ does not change.
In the special additive-noise case $Y = g(X_{\PA(Y)}) + \varepsilon$ with $\varepsilon\indep X_{\PA(Y)}$ and $\E[\varepsilon]=0$, the follower's interventions are inconsequential for parent-based prediction under the squared loss: for all $e \in \Env$, $R_e(f_{\PA(Y)})=\Var(\varepsilon)$. Without this additive-noise structure, the risk may still vary across environments through the marginal distribution of $X_{\PA(Y)}$, but the variation is only due to covariate shift under a fixed conditional $Y\given X_{\PA(Y)}$.
 
The above arguments do not apply exclusively
to $X_{\PA(Y)}$. Similar reasoning 
extends to other
invariant subsets that 
contain 
children of $Y$ that the follower cannot manipulate. We now
define invariant sets and
show that there is an invariant set, the stable blanket, that improves upon the parents.

\subsection{Stable blankets
are never worse than
the causal parents} \label{sec:stable_blanket}
\begin{definition}[Invariant subsets and predictors] \label{def:inv_subset}
A subset of covariates $S \subseteq \{1, \ldots, d\}$ is called \emph{invariant} if $Y \indep E \given X_S$ under $\P$,
or, equivalently, if 
$Y \given X_S$ is the same under all $\mathbb{P}_e$, $e \in \Env$.
For all invariant  $S$, we define the 
predictor $f_S: \mathcal{X} \to \mathcal{A}$ as $ f_S(x) \coloneq \E[Y \given X_S = x_S]$.
\end{definition}

The stable blanket of $Y$ is an invariant subset capturing as
much information about $Y$ as possible while excluding irrelevant covariates. 
In this sense, it resembles the Markov blanket\footnote{The \emph{Markov blanket} of $Y$ is the smallest
$S \subseteq \{1, \ldots, d\}$ s.t.\ $i \dsep{\G} Y \given S$ for all
$i \notin S$, where $\dsep{\G}$ denotes d-separation in $\G$. Graphically,
$\MB(Y) = \PA(Y) \cup \CH(Y) \cup \PA(\CH(Y))$
\citep[§~3.3]{pearlProbabilisticReasoningIntelligent1988}; also called Markov boundary.
} $\MB(Y)$:
$f_{\MB(Y)}$ minimizes the risk in a single environment, but $\MB(Y)$ may contain intervened children of $Y$
and need not be invariant. To define the stable blanket, let
$\CH_\mathrm{int}(Y) \coloneqq \CH(Y) \cap \CH(E)$ denote the intervened children of $Y$, and denote the intervened children of $Y$ with their descendants by
$$\DEYforb \coloneqq \CH_\mathrm{int}(Y) \cup \DE(\CH_\mathrm{int}(Y)).$$
The \emph{stable blanket} \citep[Def.~3.4]{pfisterStabilizingVariableSelection2021}
is a set satisfying
$\PA(Y) \subseteq \SB(Y) \subseteq \MB(Y)$; it is defined as
\begin{equation} \label{eq:stable_blanket_decomp}
    \SB(Y) \coloneqq \PA(Y) \cup \big(\CH(Y) \setminus \DEYforb \big) \cup \PA\big(\CH(Y) \setminus \DEYforb \big).
\end{equation}
Since $E \dsep{\G} Y \given \SB(Y)$ holds, $\SB(Y)$ is invariant by the global Markov property.

In previous work, optimality of the stable blanket has been discussed only as an average optimality among invariant sets \citep[Thm.~3.5]{pfisterStabilizingVariableSelection2021}. 
We now prove a stronger result:
among predictors that do not use information from $X_{\DEYforb}$, $f_{\SB(Y)}$ minimizes the risk in all environments.
\begin{lemma} \label{lem:stable_region_opt}
For all $\sigma(X_{\DEYforb^C})$-measurable $f \colon \mathcal{X} \to \mathcal{A}$ and all $e \in \Env$, we have $R_e(f) \geq R_e(f_{\SB(Y)}).$ In particular, for all $e \in \Env$, $R_e(f_{\PA(Y)}) \geq R_e(f_{\SB(Y)})$.
\end{lemma}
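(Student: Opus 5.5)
Fix $e \in \Env$ and let $D \coloneqq \DEYforb$. The idea is to show that, under $\P_e$, the best $\sigma(X_{D^C})$-measurable predictor is $\E_e[Y \given X_{D^C}]$, and that this conditional expectation equals $f_{\SB(Y)}(X)$ almost surely. Strict consistency of $\ell$ then gives the claim. Concretely, for any $\sigma(X_{D^C})$-measurable $f$ I would condition on $X_{D^C}$ and apply strict consistency to the conditional law $F = \P_e(Y \in \cdot \given X_{D^C})$:
\[
\E_e[\ell(Y, f(X)) \given X_{D^C}] \geq \E_e\big[\ell\big(Y, \E_e[Y \given X_{D^C}]\big) \given X_{D^C}\big] \quad \text{a.s.}
\]
Taking expectations yields $R_e(f) \geq R_e(\E_e[Y\given X_{D^C}])$, assuming finiteness as stated.

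The graphical step is to show $Y \dsep{\G} X_{D^C \setminus \SB(Y)} \given X_{\SB(Y)}$ and $E \dsep{\G} Y \given X_{\SB(Y)}$. The second statement is already given in the text. For the first, I would take $Z \coloneqq D^C \setminus \SB(Y)$ and consider a path from $Y$ to some $j \in Z$.

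If the path leaves $Y$ through a parent, that parent is in $\SB(Y)$ and is a non-collider, so the path is blocked. If it leaves through a child $c \in \CH(Y) \setminus D$, then $c \in \SB(Y)$; it is either a non-collider (blocked) or a collider $Y \to c \leftarrow p$. In the collider case $p \in \PA(c) \subseteq \SB(Y)$, and $p$ is a non-collider on the path (edge $p \to c$), so the path is blocked at $p$. If it leaves through a child $c \in D$, then because the path must end in $D^C$ it has to exit $D$ at some point. Since $D$ is closed under descendants, an edge from $D$ into $D^C$ must point into $D$. So the path contains a node of $D$ whose outgoing step enters it from outside, i.e.\ a collider in $D$. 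Neither this collider nor its descendants (all in $D$) lie in $\SB(Y) \subseteq D^C$, so the path is blocked. One needs $\SB(Y) \cap D = \varnothing$: $\PA(Y) \cap D = \varnothing$ by acyclicity, and $\PA(\CH(Y) \setminus D)$ avoids $D$ by descendant-closure.

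With both separations in hand, the global Markov property under $\P$, combined with $E$ being a source, gives $\E_e[Y \given X_{D^C}] = \E_e[Y \given X_{\SB(Y)}] = \E[Y \given X_{\SB(Y)}] = f_{\SB(Y)}(X)$ almost surely. The second equality uses invariance. The main obstacle is making the $D$-exit argument fully rigorous, including the collider-at-$p$ subcase.

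For the final claim, $\PA(Y) \subseteq D^C$, so $f_{\PA(Y)}$ is $\sigma(X_{D^C})$-measurable and the first part applies directly.
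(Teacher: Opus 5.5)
Your proposal is correct and follows the paper's proof. It uses conditional strict consistency with $W = X_{\DEYforb^C}$ (Lem.~\ref{lem:bayes_restricted}), then the d-separation $Y \dsep{\G} \DEYforb^C \setminus \SB(Y) \given \SB(Y)$ with the global Markov property and Doob's conditional independence property, and finally invariance of $\SB(Y)$ to identify the conditional mean with $f_{\SB(Y)}$.

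The one difference is that you prove this d-separation by a direct path argument, whereas the paper takes it from the definition of the stable blanket in \citet{pfisterStabilizingVariableSelection2021}. Your $\DEYforb$-exit step is imprecise as written, because the node where the path leaves $\DEYforb$ need not itself be a collider. It is cleanest phrased as in Lem.~\ref{lemma:d-sep}: since $j \notin \DE(c)$, the path is not directed, so a first collider exists. That collider is reached by a directed path from $c$, hence it and all its descendants lie in $\DEYforb$.
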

Proofs of all results are in \Cref{app:proofs}. 
Lem.~\ref{lem:stable_region_opt} implies 
a subset-level formulation,
recovering \citep[][Thm.~3.5]{pfisterStabilizingVariableSelection2021} under faithfulness; see \Cref{app:subset_formulation}.
The leader's question is which $f$ minimizes
$\sup_{e \in \Env^*(f)} R_e(f)$ in the prediction-intervention game \eqref{eq:pred_intervention_game}. Lem.~\ref{lem:stable_region_opt} does not immediately answer this since $\Env^*(f)$ depends on $f$. 
But it shows that for two common follower objectives, stable blankets beat
parents.
\begin{theorem}[Stable blanket vs.\ parents] \label{thm:PA_vs_SB_game}
Suppose $\Vfoll$ satisfies either
(i) $\Vfoll_e(f) = R_e(f)$, or
(ii) $\Vfoll_e(f) = \E_e[\,\alpha + \beta\, f(X) - c_e(X)\,]$ for
$\alpha, \beta \in \mathbb{R}$ and a family $\{c_e\}_{e \in \Env}$ of
measurable functions on $\mathcal{X}$ that do not depend on $f$. Then, 
$$
\sup_{e \in \Env^*(f_{\SB(Y)})} R_e(f_{\SB(Y)})
   \leq \sup_{e \in \Env^*(f_{\PA(Y)})} R_e(f_{\PA(Y)}).
   $$
\end{theorem}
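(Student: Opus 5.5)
We treat the two follower objectives separately. In both cases we use the pointwise bound from Lemma~\ref{lem:stable_region_opt}, $R_e(f_{\PA(Y)}) \geq R_e(f_{\SB(Y)})$ for every $e \in \Env$, together with a description of the follower's best responses.

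\emph{Case (i).} Here $\Vfoll_e(f) = R_e(f)$, so the follower is adversarial. Then $\Env^*(f)$ consists of the maximizers of $R_e(f)$ over $e\in\Env$, and by assumption this set is nonempty. Hence
\[
\sup_{e \in \Env^*(f)} R_e(f) = \max_{e\in\Env} R_e(f) = \sup_{e\in\Env} R_e(f)
\]
for any $f$. Taking the supremum over $e\in\Env$ in the pointwise inequality of Lemma~\ref{lem:stable_region_opt} gives
\[
\sup_{e\in\Env} R_e(f_{\SB(Y)}) \le \sup_{e\in\Env} R_e(f_{\PA(Y)}).
\]
This is the claim.

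\emph{Case (ii).} The plan is to show that the follower's best-response set is the same for both predictors, after which the pointwise inequality finishes the argument. The key identity will be $\E_e[f_{\SB(Y)}(X)] = \E_e[Y] = \E_e[f_{\PA(Y)}(X)]$ for all $e\in\Env$.

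To prove it, recall that $\SB(Y)$ and $\PA(Y)$ are both invariant. So $f_S(x) = \E[Y\given X_S=x_S]$ coincides $\P_e$-a.s.\ with $\E_e[Y\given X_S]$. This uses that $Y \indep E \given X_S$: the conditional expectation under $\P(\cdot\given E=e)$ equals the pooled one, at least on the support of $X_S$ under $\P_e$, which is all we need. The tower property then gives $\E_e[f_S(X)] = \E_e[Y]$, assuming $Y$ is integrable, as stated in the setup.

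Consequently
\[
\Vfoll_e(f_{\SB(Y)}) = \alpha + \beta\,\E_e[Y] - \E_e[c_e(X)] = \Vfoll_e(f_{\PA(Y)})
\]
for every $e$, since $c_e$ does not depend on $f$. It follows that $\Env^*(f_{\SB(Y)}) = \Env^*(f_{\PA(Y)}) =: \Env^*$. Then
\[
\sup_{e\in\Env^*} R_e(f_{\SB(Y)}) \le \sup_{e\in\Env^*} R_e(f_{\PA(Y)})
\]
follows by taking suprema of the pointwise inequality over the common set.

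The main subtlety is the a.s.\ identification of the pooled predictor $f_S$ with the environment-wise conditional mean, i.e., making sure $f_S$ is well defined on the support of each $\P_e$. I would handle this through the invariance in Definition~\ref{def:inv_subset}: $Y\given X_S$ is the same under all $\P_e$, so a common version of the conditional mean exists. I would also check that $\E_e[c_e(X)]$ is finite, or at least identical for both predictors so that it cancels; the latter holds trivially because $c_e$ does not depend on $f$.
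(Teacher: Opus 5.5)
Your proposal is correct and follows the paper's proof essentially step for step. In case (i), the nonempty best-response set attains the supremum over all of $\Env$, so the pointwise inequality of Lemma~\ref{lem:stable_region_opt} suffices; in case (ii), the tower property applied to the invariant sets $\PA(Y)$ and $\SB(Y)$ gives identical follower utilities and hence identical best-response sets, and your extra care about identifying $f_S$ with $\E_e[Y \given X_S]$ $\P_e$-a.s.\ is a detail the paper leaves implicit in Definition~\ref{def:inv_subset}.
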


Case~(i) describes an adversarial follower that selects environments maximizing
the leader's risk. Case~(ii) is a form commonly used in the
strategic-prediction literature \citep{hardtStrategicClassification2016}: the
follower wants to maximize $f(X)$ while incurring a cost $c_e(X)$ for the strategic shift. Thus, while the optimal $f$ for the leader is unknown, $f_{\SB(Y)}$ weakly dominates $f_{\PA(Y)}$ for these two utility classes.

\subsection{The same stable blanket for training and deployment} \label{sec:augmentation}

The previous results use the DAG $\G$, which extends the training-time DAG $\G_0$ by including edges $X_k \to X_j$ for $j \in \CH(E)$ and $k \in A_j$ that encode interventions by followers. 
If the leader only knows $\G_0$, how can they compute $\SB(Y) = \SB_{\G}(Y)$?
Would they need to anticipate which edges strategic followers add once $f$ is deployed? 
The next lemma shows that this is not necessary: 
under weak assumptions on the $A_j$'s
the stable blanket is the same in $\G_0$ and $\G$, so $\SB(Y)$ can be read off from the training-stage DAG $\G_0$ alone.

\begin{lemma}%
\label{lem:augmentation}
Assume that for all $j \in \CH_{\G_0}(E)$, $A_j \subseteq \DEforbg{\G_0}^C \setminus \{Y, E\}$, and that $\G$ is acyclic. Then $\DEforbg{\G} = \DEforbg{\G_0}$ and $\SBg{\G} = \SBg{\G_0}$. In particular, $Y \dsep{\G} E \given \SBg{\G_0}$.
\end{lemma}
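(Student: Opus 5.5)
The argument is purely graph-theoretic. The key facts are these: $\G$ differs from $\G_0$ only by edges $k \to j$ with $j \in \CH_{\G_0}(E)$ and $k \in A_j$. The set $\CH(E)$ itself is unchanged, since no edges out of $E$ are added. Moreover, $Y \notin A_j$ and $E \notin A_j$. Throughout, write $D_0 := \DEforbg{\G_0}$ and $D := \DEforbg{\G}$.

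\textbf{Step 1: the children of $Y$ agree.} We show $\CH_{\G}(Y) = \CH_{\G_0}(Y)$. New edges into $j$ come only from $A_j$, and $Y \notin A_j$, so no new edge $Y \to j$ appears. Combined with $\CH(E)$ being unchanged, this gives $\CH_{\mathrm{int},\G}(Y) = \CH_{\mathrm{int},\G_0}(Y) =: C$. Similarly, $\PA_{\G}(Y) = \PA_{\G_0}(Y)$, because no edges into $Y$ are added ($Y \notin \CH(E)$).

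\textbf{Step 2: $D = D_0$ (main step).} The inclusion $D_0 \subseteq D$ is immediate, since $\G_0$ is a subgraph of $\G$.

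For $D \subseteq D_0$, take a directed path in $\G$ from some $c \in C$ to a node $v$. Suppose it uses a new edge $k \to j$, and consider the first such edge along the path. The segment from $c$ to $k$ consists of old edges only, so $k \in C \cup \DE_{\G_0}(C) = D_0$. This contradicts $k \in A_j \subseteq D_0^C$. Hence every such path lies entirely in $\G_0$, and $v \in D_0$.

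This first-new-edge argument is the crux, and it is exactly where the assumption on $A_j$ enters. Acyclicity of $\G$ is only needed so that descendants are well defined.

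\textbf{Step 3: $\SB$ agrees.} By Steps 1 and 2, $\CH(Y) \setminus D$ is the same set $T$ in both graphs. It remains to show $\PA_{\G}(T) = \PA_{\G_0}(T)$.

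Suppose a node $j \in T$ received a new parent. Then $j \in \CH(E)$ and $j \in \CH(Y)$, so $j \in C \subseteq D$, which contradicts $j \in T$. So nodes in $T$ have identical parent sets in both graphs. Plugging into \eqref{eq:stable_blanket_decomp} gives $\SBg{\G} = \SBg{\G_0}$.

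\textbf{Step 4: the separation statement.} The final claim follows from $E \dsep{\G} Y \given \SBg{\G}$, which is stated after \eqref{eq:stable_blanket_decomp} for any DAG of this form. Here we apply it to $\G$.

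If one prefers a self-contained check of that separation statement, consider any path from $E$ to $Y$ in $\G$ and look at the node adjacent to $Y$:
\begin{itemize}[nosep]
\item if it is a parent of $Y$, it lies in $\SB$ and is a non-collider, so the path is blocked;
\item if it is a child of $Y$ in $T$, it is a non-collider in $\SB$ unless the path continues into it, in which case the next node is a parent of $T$, hence in $\SB$ and a non-collider;
\item if it is a child of $Y$ in $C \cup D$, the path must enter via a collider in $D$, and no node of $D$ or its descendants lies in $\SB$ (descendants of $D$ stay in $D$, and $D \cap \SB = \varnothing$ needs a quick check using acyclicity), so the path is blocked.
\end{itemize}
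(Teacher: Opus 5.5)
Your proposal is correct and follows the paper's proof essentially step for step: the same observation that $\CH(Y)$, $\CH(E)$ and $\PA(Y)$ are unchanged, the same first-new-edge contradiction for $\DEforbg{\G} = \DEforbg{\G_0}$, the same argument that new edges only enter $\CH_{\G_0}(E)$, which is disjoint from $\CH(Y)\setminus\DEforbg{\G_0}$, and the same appeal to Lemma~\ref{lemma:d-sep} with Remark~\ref{rem:SB_in_different_DAGs} for the final d-separation. In your optional self-contained check, you should also note that the other parent $k$ of a collider $i \in T$ cannot be $E$ itself, because $i \notin \CH(E)$.
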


Intuitively, the restriction on $A_j$ is a timing and observability condition. The follower may condition the intervention on pre-intervention variables,
but neither on $Y$ nor on variables in $\DEforbg{\G_0} \subseteq \DE_{\G_0}(Y) \cap \DE_{\G_0}(E)$, which are downstream of the intervened children of $Y$. Similarly, acyclicity rules out interventions that depend, directly or indirectly, on their own outputs.

\Cref{sec:optimality} asks whether $f_{\SB(Y)}$ is worst-case
optimal among all measurable predictors and connects 
this question to
standard distribution generalization.

\section{Distribution generalization and an upper bound} \label{sec:optimality}

\Cref{sec:strategiesinvsets} shows that no $\sigma(X_{\DEYforb^C})$-measurable predictor improves on $f_{\SB(Y)}$ in any environment. For specific choices of $\Vfoll$, this implies that $f_{\SB(Y)}$ is a better choice than $f_{\PA(Y)}$ for the leader in the prediction-intervention game \eqref{eq:pred_intervention_game}. In general, the optimal leader strategy depends on $\Vfoll$, which may be unknown to the leader and may vary across followers. We propose to use an upper bound that comes with two benefits: it holds irrespective of $\Vfoll$ and it allows for easier optimization than the game objective. 
For all $f$, we have
\begin{equation} \label{eq:worst_case_obj}
    \sup_{e \in \Env^*(f)} R_e(f) \leq \sup_{e \in \Env} R_e(f).
\end{equation}
The bound is attained when the follower selects environments to maximize $\Vfoll_e(f) = R_e(f)$, and can be loose
when the objectives are different. Minimizing the right-hand side is a 
distribution generalization (DG) problem. For the rest of the section, we focus on the worst-case problem
\begin{equation} \label{eq:worst-case}
    f^* \in \argmin_{f \colon \mathcal{X} \to \mathcal{A}} \sup_{e \in \Env} R_e(f).
\end{equation}

This differs from the standard DG objective \citep{rojas-carullaInvariantModelsCausal2018, arjovskyInvariantRiskMinimization2020, christiansenCausalFrameworkDistribution2021} in the uncertainty set: 
in our work,
$\Env$ is generated by strategic follower interventions after deployment, possibly including additional edges into intervened variables, which subsumes the standard DG problem.
Still, the bound~\eqref{eq:worst_case_obj} makes
prediction-intervention games concrete application scenarios for existing worst-case methods. In the following, we extend and strengthen existing DG results to make them applicable in prediction-intervention games.
Our new results also hold under the augmentation construction in \Cref{sec:problem_setup} in that they apply to $(\C, \G, \Env)$ and to $(\C_0, \G_0, \Env_0)$ alike.
We formulate our results using the former notation.

\subsection{Optimality of the stable blanket} \label{sec:worst_case}

By Lem.~\ref{lem:stable_region_opt}, we know that the stable blanket solves the worst-case problem \eqref{eq:worst-case} over predictors that do not use $X_{\DEYforb}$. We now ask %
whether $f_{\SB(Y)}$ is worst-case optimal among all measurable predictors $f\colon \mathcal{X} \to \mathcal{A}$.\footnote{
The answer does not follow from existing work:  e.g., 
\citet[][Thm.~4]{rojas-carullaInvariantModelsCausal2018} (or, more precisely, Prop.~\ref{prop:adversarial}, which extends the results to our setting)
prove that 
for all invariant $S$, $f_S$ is a worst-case optimal predictor over distributions sharing the conditional $Y \given X_S$; this result is insufficient, as, for us, the considered distributions do not change with $S$.
}
We will prove in 
\Cref{sec:sbnotalwaysopt}
that
this is not always the case: $\SB(Y)$
excludes descendants of $\CH_\mathrm{int}(Y)$ but there are examples in which these covariates
carry substantial information about $Y$
and are only weakly affected by the intervention.
We first introduce two assumptions under which worst-case optimality %
does hold (and which prevent the counterexamples mentioned above).
\begin{assumption}[Strong interventions on $\DE(E)$] \label{as:A1}
    For all $e \in \Env$ there exists $e' \in \Env$ such that (i) 
 $X_{\SB(Y)}$ has the same marginal distribution under $\P_e$ and $\P_{e'}$, and (ii)
        $$
        Y \indep X_{\DEYforb} \given X_{\DEYforb^C} \text{ under } \P_{e'}.$$
\end{assumption}
Asm.~\ref{as:A1} is inspired by the confounding-removing interventions of \citet{christiansenCausalFrameworkDistribution2021} (see also \citealp{saengkyongamInvariantPolicyLearning2022}). Intuitively, (ii) asks for an environment $e'$ in which $\E_{e'}[Y \given X] = f_{\SB(Y)}(X)$.
This 
is not always satisfied:
it requires the environment to strongly
influence descendants of intervened children that are not directly 
affected 
by interventions
(see Ex.~\ref{ex:star_counterexample} for a setting violating this condition). 
A sufficient condition for Asm.~\ref{as:A1} is that
no such nodes exist and interventions on $\CH_\mathrm{int}(Y)$ are strong.
\begin{assumption}[Strong interventions on $\CH_\mathrm{int}(Y)$ with graphical condition] \label{as:A2} 
In $\G$,
\begin{equation} \label{eq:star}
    \CH(Y) \cap \DE(\CH_\mathrm{int}(Y)) \subseteq \CH_\mathrm{int}(Y)
\end{equation}
is satisfied, and for all $e \in \Env$ there exists an environment $e' \in \Env$ for which every $j \in \CH_\mathrm{int}(Y)$ is set independently of all other variables, with all other structural assignments unchanged.
\end{assumption}
Either assumption ensures worst-case optimality of the stable blanket.
\begin{theorem}[Worst-case optimality of $\SB(Y)$] \label{thm:worst_case}
If either Asm.~\ref{as:A1} or~\ref{as:A2} holds, then for all
measurable $f\colon \mathcal{X}\to\mathcal{A}$, $f_{\SB(Y)}$ solves Problem~\eqref{eq:worst-case}:
\begin{equation} \label{eq:worst_case_statement}
    \sup_{e \in \Env} R_e(f) \geq \sup_{e \in \Env} R_e(f_{\SB(Y)}).
\end{equation}
\end{theorem}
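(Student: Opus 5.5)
The approach is to show that every predictor $f$ has, somewhere in $\Env$, an environment whose risk is at least the worst-case risk of $f_{\SB(Y)}$. Fix $f$ and $e \in \Env$. Under Asm.~\ref{as:A1}, take the environment $e'$ given by the assumption. By (ii), $Y \indep X_{\DEYforb} \given X_{\DEYforb^C}$ under $\P_{e'}$, so
\[
\E_{e'}[Y \given X] = \E_{e'}[Y \given X_{\DEYforb^C}].
\]
The right-hand side is $\sigma(X_{\DEYforb^C})$-measurable. Applying Lem.~\ref{lem:stable_region_opt} in environment $e'$ to this $\sigma(X_{\DEYforb^C})$-measurable function, together with strict consistency of $\ell$ (which makes $\E_{e'}[Y\given X]$ the risk minimizer), I would conclude that $\E_{e'}[Y \given X] = f_{\SB(Y)}(X)$ $\P_{e'}$-a.s.

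A cleaner route to the same identification is to use invariance directly: $\E_{e'}[Y\given X_{\SB(Y)}]=f_{\SB(Y)}$, and the risk of $f_{\SB(Y)}$ in $e'$ is no larger than that of $\E_{e'}[Y\given X_{\DEYforb^C}]$ by the lemma, while it is no smaller because $\E_{e'}[Y\given X]$ is the minimizer. Equality of risks plus strict consistency then forces a.s.\ equality. This step needs care: strict consistency is pointwise in the conditional law, so I would condition on $X$ and apply it to the conditional distribution of $Y$, using finiteness of risks.

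Consequently, $R_{e'}(f) \ge R_{e'}(f_{\SB(Y)})$ for every measurable $f$. Next I use (i). Since $Y\given X_{\SB(Y)}$ is invariant and $X_{\SB(Y)}$ has the same marginal under $\P_e$ and $\P_{e'}$, the joint law of $(X_{\SB(Y)},Y)$ coincides under $e$ and $e'$. Because $f_{\SB(Y)}$ depends only on $X_{\SB(Y)}$, this gives $R_{e'}(f_{\SB(Y)})=R_e(f_{\SB(Y)})$. Hence
\[
\sup_{\tilde e\in\Env} R_{\tilde e}(f)\ \ge\ R_{e'}(f)\ \ge\ R_e(f_{\SB(Y)}),
\]
and taking the supremum over $e$ yields \eqref{eq:worst_case_statement}.

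Under Asm.~\ref{as:A2}, I would reduce to Asm.~\ref{as:A1} by checking that the environment $e'$ provided satisfies (i) and (ii). For (i), I need $X_{\SB(Y)}$ to be unaffected by replacing the assignments of $\CH_\mathrm{int}(Y)$. By \eqref{eq:stable_blanket_decomp}, $\SB(Y)$ contains no node of $\DEYforb$. I would then argue that the nodes of $\SB(Y)$ are non-descendants of $\CH_\mathrm{int}(Y)$. The parents of $Y$ are non-descendants by acyclicity. The nodes in $\CH(Y)\setminus\DEYforb$ are excluded by definition. For a parent $k$ of such a child: if $k$ were a descendant of $\CH_\mathrm{int}(Y)$, it would lie in $\DEYforb$ and hence outside $\SB(Y)$. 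So the ancestral structure of $X_{\SB(Y)}$, together with its noises, is unchanged, and its marginal law is the same under $e$ and $e'$. (Here one must take care that $e$ and $e'$ share the assignments outside $\CH_\mathrm{int}(Y)$, as the assumption states.)

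For (ii), I would show d-separation $Y \dsep{\G_{e'}} X_{\DEYforb} \given X_{\DEYforb^C}$ in the DAG of $e'$, where every $j\in\CH_\mathrm{int}(Y)$ has no parents. In that graph, $Y$ has no edges into $\DEYforb$. By \eqref{eq:star}, every child of $Y$ in $\DE(\CH_\mathrm{int}(Y))$ is an intervened child, so its edge from $Y$ is cut. Any path from $Y$ into $\DEYforb$ must therefore enter via an edge into a node of $\DEYforb$ from outside, or exit $Y$ through its parents or non-forbidden children. The crucial observation is that all members of $\DEYforb^C$ are conditioned on. A path must have a first node in $\DEYforb$, and its predecessor lies in $\DEYforb^C$ and hence is observed. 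That predecessor is then a non-collider, because the edge points into the $\DEYforb$ node, which blocks the path. The only exception would be a path whose first forbidden node is adjacent to $Y$ itself, which is ruled out since $Y$ has no edges to $\DEYforb$ in $\G_{e'}$. This includes the possibility of $\DEYforb$ nodes being parents of $Y$, which cannot happen because they are descendants of children of $Y$.

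I expect this d-separation bookkeeping to be the main obstacle. In particular, I must confirm that edges from $\DEYforb$ back into $\DEYforb^C$ cannot occur (they cannot, since descendants of forbidden nodes are forbidden). The global Markov property then gives (ii), and the first part of the proof applies.
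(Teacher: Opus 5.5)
Your proof is correct and has the same overall structure as the paper's. You reduce Asm.~\ref{as:A2} to Asm.~\ref{as:A1}, and your argument for part (i) is identical to the paper's. You then use that $\E_{e'}[Y\given X]$ minimizes the risk in $e'$, identify it with $f_{\SB(Y)}$ via (ii), and transfer $R_{e'}(f_{\SB(Y)})=R_e(f_{\SB(Y)})$ through equality of the law of $(X_{\SB(Y)},Y)$.

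There are two local differences. First, you do the identification through Lem.~\ref{lem:stable_region_opt} and the equality case of strict consistency; the paper instead uses invariance of $\DEYforb^C$ and the Doob identity directly. Your route is fine, and you do not even need the a.s.\ identification, since $R_{e'}(f)\ge R_{e'}(\E_{e'}[Y\given X])=R_{e'}(\E_{e'}[Y\given X_{\DEYforb^C}])\ge R_{e'}(f_{\SB(Y)})$ already suffices.

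Second, the d-separation arguments for (ii) differ. The paper blocks every path at the first or second node after $Y$: parents of $Y$; intervened children via \eqref{eq:star} and the cut edge; and a non-intervened child, or, if it is a collider, its other parent, which lies in $\PA(\CH(Y)\setminus\DEYforb)\subseteq\SB(Y)$. You block at the last node before the path enters $\DEYforb$, which must be an observed non-collider because $\DEYforb$ is closed under descendants. Your version is shorter and avoids the collider case.

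However, your version uses that every node other than $Y$ is a covariate. In the paper's $\G'$, which keeps $E$ as a node, that predecessor can be $E$ itself, which is not conditioned on. An example is $Y\leftarrow p\leftarrow E\to q$ with $p\in\PA(Y)\cap\CH(E)$ and $q\in(\CH(E)\cap\DEYforb)\setminus\CH_\mathrm{int}(Y)$; the path is blocked at $p$, not where your argument looks. So you should run your argument in the DAG over $(X,Y)$ of the SCM with $E$ fixed at $e'$, or add $E$ to the conditioning set. The first option is legitimate because $\P_{e'}$ is induced by that SCM with independent noises, and your phrase ``the DAG of $e'$'' can be read this way.
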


Worst-case optimality of $f_{\SB(Y)}$ still holds if strong interventions are achieved only asymptotically, which can be relevant in some linear SCMs.
\begin{proposition}[Asymptotic relaxation of Asm.~\ref{as:A1}]
\label{prop:asymptotic}
For all $e \in \Env$, define the predictor $\mu_e(x) \coloneq \E_e[Y \given X = x]$. If for
all $e \in \Env$ there exists a sequence
$(e'_j)_{j \geq 1} \subseteq \Env$ such that (i) $X_{\SB(Y)}$ has the same marginal distribution under $\P_e$ and $\P_{e'_j}$ for all $j$, and\footnote{Under $\ell(y,q) = (y-q)^2$, (ii) is equivalent to $\E_{e'_j} [(f_{\SB(Y)}(X)-\E_{e'_j}[Y \given X] )^2] \xrightarrow{j \to \infty} 0$.} (ii) $R_{e'_j} (f_{\SB(Y)})
        - R_{e'_j}(\mu_{e'_j})
        \xrightarrow{j \to \infty} 0,$
then \eqref{eq:worst_case_statement} holds for all measurable
$f \colon \mathcal{X} \to \mathcal{A}$.
\end{proposition}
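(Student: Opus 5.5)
Fix a measurable $f\colon\mathcal{X}\to\mathcal{A}$ and an environment $e\in\Env$. The plan is to show
\[
\sup_{e''\in\Env} R_{e''}(f)\;\geq\; R_e(f_{\SB(Y)}),
\]
and then take the supremum over $e$. Let $(e'_j)_{j\ge1}$ be the sequence the hypothesis provides for $e$. For every $j$ we have the trivial bound $\sup_{e''} R_{e''}(f) \ge R_{e'_j}(f)$.

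The first step lower-bounds $R_{e'_j}(f)$ using strict consistency of $\ell$. Conditioning on $X$ and applying the defining inequality of strict consistency pointwise to the conditional law $Y \given X=x$ under $\P_{e'_j}$ gives
\[
R_{e'_j}(f) = \E_{e'_j}\bigl[\E_{e'_j}[\ell(Y,f(X))\given X]\bigr] \ge \E_{e'_j}\bigl[\E_{e'_j}[\ell(Y,\mu_{e'_j}(X))\given X]\bigr] = R_{e'_j}(\mu_{e'_j}).
\]
Here I use that $Y$ has finite mean and the risks are finite, as assumed throughout; a regular conditional distribution exists since $\mathcal{Y}\subseteq\mathbb{R}$. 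Then condition (ii) gives
\[
R_{e'_j}(f)\;\ge\; R_{e'_j}(f_{\SB(Y)}) - \varepsilon_j, \qquad \varepsilon_j \to 0.
\]

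The second step shows $R_{e'_j}(f_{\SB(Y)}) = R_e(f_{\SB(Y)})$ for every $j$. Since $\SB(Y)$ is invariant, the conditional law of $Y\given X_{\SB(Y)}$ is the same under $\P_e$ and $\P_{e'_j}$. Condition (i) says the marginal of $X_{\SB(Y)}$ also agrees. Hence the joint law of $(X_{\SB(Y)},Y)$ coincides under the two environments. Because $f_{\SB(Y)}$ depends on $X$ only through $X_{\SB(Y)}$, the risk $\E[\ell(Y,f_{\SB(Y)}(X))]$ is a functional of that joint law alone, so the two risks are equal.

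Combining the two steps,
\[
\sup_{e''\in\Env} R_{e''}(f)\;\ge\; R_e(f_{\SB(Y)}) - \varepsilon_j \quad\text{for all } j.
\]
Letting $j\to\infty$ and then taking the supremum over $e$ yields \eqref{eq:worst_case_statement}. The only delicate point is the pointwise conditional use of strict consistency, which needs $f(x)\in\mathcal{A}$ and finiteness of the conditional expectations almost surely; I would handle this by restricting to the almost-sure set where they are finite, which the finite-risk assumption guarantees. If $\sup_{e''}R_{e''}(f)=\infty$ there is nothing to prove. This argument also recovers \Cref{thm:worst_case} under Asm.~\ref{as:A1} as the special case of a constant sequence.
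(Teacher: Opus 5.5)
Your proposal is correct and follows essentially the same route as the paper. The pointwise conditional use of strict consistency that you carry out inline is exactly Lem.~\ref{lem:bayes_restricted} applied under $\P_{e'_j}$ with $W = X$, and the paper likewise uses invariance of $\SB(Y)$ plus condition (i) to get $R_{e'_j}(f_{\SB(Y)}) = R_e(f_{\SB(Y)})$ before passing to the limit via (ii).
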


\Cref{thm:worst_case} is a function-level statement: under
Asm.~\ref{as:A1} or~\ref{as:A2}, no measurable predictor improves
on $f_{\SB(Y)}$ in worst-case risk. Our methodology itself is
subset-based:
in strategic
settings, the choice of subset shapes follower incentives, as in the example of \Cref{sec:intro}. Furthermore, subset-based predictors are interpretable 
and easy to audit, which matters in regulated settings such as lending \citep{chen2018fair, cfpbCircular2022_03}.
\Cref{app:worst-case-opt-functions} contrasts the subset view
with function-based invariance criteria such as IRM
\citep{arjovskyInvariantRiskMinimization2020} and HSIC-X
\citep{saengkyongamExploitingIndependentInstruments2022}.

\subsection{
Counterexamples} \label{sec:sbnotalwaysopt}
We now show 
that, in general, the graphical condition~\eqref{eq:star} in Asm.~\ref{as:A2} cannot be dropped.

\begin{example} \label{ex:star_counterexample}
Consider an SCM with independent noises $\varepsilon_1 \sim \mathrm{Ber}(p)$ for $p \in (0,1/2)$ and $\varepsilon_2$:\\[0pt]
\begin{minipage}{0.32\textwidth}
\begin{align*}
    Y &\sim \mathrm{Ber}(1/2), \\
    X_2 &\coloneq h(Y,E,\varepsilon_2) \in \{0,1\},
\end{align*}
\end{minipage}
\hfill
\begin{minipage}{0.32\textwidth}
\begin{align*}
    X_1  &\coloneq 
\begin{cases}
Y & \text{if } X_2 = 0,\\
Y \oplus \varepsilon_1 & \text{if } X_2 = 1,
\end{cases}
\end{align*}
\end{minipage}
\hfill
\begin{minipage}{0.32\textwidth}
\centering
\vspace*{0.1em}
\input{dag_3}
\end{minipage}\\[1pt]
where $\oplus$ is addition mod $2$, and %
the function $h(\cdot,e,\cdot)$ may vary with $e$, so that
different environments correspond to different mechanisms for $X_2$.
Here, $\SB(Y) = \varnothing$, and the associated DAG violates Asm.~\ref{as:A2}. There exists a classifier $f\colon \mathcal{X} \to [0,1]$ achieving a
Brier score risk 
that is uniformly lower than the one of
$f_{\SB(Y)} \equiv 1/2$: for all $e \in \Env$, we have
$R_e(f) \leq p(1-p) < \tfrac{1}{4} = R_e(f_{\SB(Y)})$. 
\end{example}
Here, 
there is 
a classifier
using
$X_2 \not\in \SB(Y)$
that 
outperforms all classifiers based on an invariant subset.
For regression, 
we can always construct such counterexamples:
for all DAGs in which \eqref{eq:star} fails, there exists an SCM for which $f_{\SB(Y)}$ is not worst-case optimal (proved formally in Prop.~\ref{prop:star_necessary}).

\section{
Learning subset-based predictors
} \label{sec:learning_subset_predictors}
When the causal graph is known, the stable blanket $\SB(Y)$ is identified and $f_{\SB(Y)}$ can be estimated by pooling data across training environments. 
In practice, the causal graph is often unknown, and the stable blanket must be estimated using finitely many data from $\Envtr$. To do so, we can build on existing tests and methods (extending some of them to classification). To learn a predictor for $Y$, we search for subsets $S \subseteq \{1, \ldots, d\}$ that are both invariant on $\Envtr$ and predictive of $Y$. Throughout, let $\hat{f}_S: \mathcal{X} \to \mathcal{A}$ estimate $f_S(x) = \E[Y \given X_S = x_S]$, trained on data pooled across $\Envtr$; any base model (such as random forests or neural networks) can be used.
There are various ways of testing invariance, i.e.,
$H_{0,S}\colon Y \indep E \given X_S$, %
which we detail
in \Cref{app:invariance_tests}. We take the resulting
$p$-value as an invariance score $s_\mathrm{inv}(S)$, yielding
$\hat{\mathcal{I}}_{\Envtr} \coloneq \{S \given s_\mathrm{inv}(S) \geq
\alpha_\mathrm{inv} \}$ for a tuning parameter
$\alpha_\mathrm{inv} \in (0,1)$. 
We perform an optional screening
step (e.g., $\ell_1$-penalization). We denote by $s_\mathrm{pred}(S)$ a prediction score of $\hat{f}_S$, such as the out-of-sample negative MSE for regression or the out-of-sample negative binary cross-entropy for classification on the training environments.

\paragraph{Invariant most-predictive subsets.}
Given $\hat{\mathcal{I}}_{\Envtr}$, one option is to predict using the invariant subset with the highest prediction score, that is, $\hat{S}_\mathrm{max} \coloneq \argmax_{S \in \hat{\mathcal{I}}_{\Envtr}} s_\mathrm{pred}(S)$ \citep{rojas-carullaInvariantModelsCausal2018}. 
We call this approach IMP (invariant most-predictive) \citep{Gnecco2026jmlr}.

\paragraph{Stabilized regression and classification.}
Since IMP is based
on a single subset, 
errors in the invariance test may have a large impact on IMP's performance.
Stabilized regression \citep{pfisterStabilizingVariableSelection2021} addresses this by using an ensemble
$\hat{f}^{\mathrm{SR}}(x) \coloneq \sum_{S \subseteq \{1, \ldots, d\}} \hat{\omega}_S \hat{f}_S(x)$, 
where the 
non-negative weights sum to one.
Non-zero weights are assigned only to subsets that are both invariant and sufficiently predictive, with the predictiveness threshold calibrated by a bootstrap. To the best of our knowledge, we are the first to port this idea to classification by replacing the base regressors, the prediction score (cross-entropy instead of MSE), and the invariance tests with their classification counterparts, yielding stabilized classification (SC). Details on SC are given in \Cref{app:SC_alg}.

\section{Experiments} \label{sec:experiments}
\subsection{Synthetic prediction-intervention games}
\label{sec:synthetic_experiments}

We validate our theoretical results for prediction-intervention games on synthetic data.
For this we consider two data-generating processes: 
a linear-Gaussian SCM and a nonlinear SCM. %
Both have the same induced graph depicted 
in \cref{fig:synthetic_dag}.
The causal parents of $Y$ are equal to
\begin{wrapfigure}[13]{r}{0.5\textwidth}
    \centering
    \vspace{-0.15cm}
    \begin{tikzpicture}[
        >=Stealth,yscale=0.99,xscale=0.7,
        var/.style={draw, circle, minimum size=7mm, inner sep=1pt, font=\small},
        pa/.style={var, fill=softsage},
sbadd/.style={var, fill=softrose},
        env/.style={draw, rectangle, rounded corners, minimum width=7mm, minimum height=7mm, inner sep=2pt, font=\small},
        arr/.style={->, thick},
        intarr/.style={->, thick, dashed}
    ]
        \node[pa]    (X2) at (-1.0,0.0) {$X_2$};
        \node[pa]    (X1) at (-1.0,2.0) {$X_1$};
        \node[var]   (Y)  at (1.,1.0) {$Y$};
        \node[sbadd] (X3) at (2.5,1.0) {$X_3$};
        \node[var]   (X4) at (2.5,0.0) {$X_4$};
        \node[var]   (X5) at (4.5,0.0) {$X_5$};
        \node[env]   (E)  at (4.5,2.0) {$E$};

        \draw[arr] (X2) -- (X1);
        \draw[arr] (X1) -- (Y);
        \draw[arr] (X2) -- (Y);
        \draw[arr] (Y) -- (X3);
        \draw[arr] (Y) -- (X4);
        \draw[arr] (X2) -- (X4);
        \draw[arr] (X4) -- (X5);

        \draw[intarr] (E) -- (X1);
        \draw[intarr] (E) -- (X4);
    \end{tikzpicture}
    \caption{DAG in \Cref{sec:synthetic_experiments}. %
    $\PA(Y)=\{X_1,X_2\}$ (blue),  
    $\SB(Y)=\{X_1,X_2,X_3\}$ (blue and red). 
    $\SB(Y)$ contains the
    non-intervened child $X_3$, but not the intervened child $X_4$ and its
    descendant $X_5$.}
    \label{fig:synthetic_dag}
\end{wrapfigure}
 $\PA(Y)=\{X_1,X_2\}$, while the stable blanket is
$\SB(Y)=\{X_1,X_2,X_3\}$. The follower may intervene on $X_1$ and on $X_4$.
We compare three predictors (either in a population or finite-sample version): 
based on $\PA(Y)$,
$\SB(Y)$, or all variables.
The follower is represented by a neural intervention mechanism with intervention
bound $b$ and is trained either to maximize the leader's MSE or to minimize the
prediction value (for details see \Cref{sec:sdfds}). All results are averaged over 10 independent runs, with shaded regions indicating $95\%$ confidence intervals for the mean. Further details on the setup are provided in \Cref{app:dgp}.

We first consider the linear-Gaussian SCM (\Cref{fig:synthetic_main}, left). In this setting the relevant conditional
expectations are available in closed form, so the predictors use population
regression quantities rather than estimates from finite training data. This
isolates the effect of the follower's intervention from statistical estimation
errors. As predicted by our theory in \cref{sec:parents}, both $\PA(Y)$ and $\SB(Y)$ yield
constant deployment MSE. $\SB(Y)$ improves over $\PA(Y)$ because the non-intervened child $X_3$ carries additional stable
information about $Y$. By contrast, the all-variable predictor is highly
unstable: it performs well for weak interventions, but its MSE increases sharply
as the intervention bound grows, since it relies on the manipulable variable $X_4$.

The nonlinear SCM shows the same qualitative ordering (\Cref{fig:synthetic_main}, right). The stable blanket and
the parent set remain much more stable than the all-variable predictor, and the
stable blanket again improves over the parents. Unlike in the linear-Gaussian
experiment, however, the stable-blanket and parent predictors are not perfectly
flat. This is because the nonlinear predictors are learned from finite training
data. The follower can exploit finite-sample regression errors by shifting the
test distribution toward regions in which the fitted predictor differs from the
population regression function. \Cref{app:synthetic_regression_error}
investigates this effect directly: as the leader's training sample size grows,
the MSE curves for $\PA(Y)$ and $\SB(Y)$ become more stable, while the
all-variable predictor remains unstable.

Finally, we also allow the follower's intervention on $X_4$ to depend on
$X_1$ and $X_3$ in addition to its usual inputs. The qualitative results do not
change; see \Cref{app:synthetic_details}. This is consistent with the
graphical theory from \Cref{sec:augmentation}: adding these action-parent edges does not change the stable
blanket.

\begin{figure*}[t]
    \centering
    \includegraphics[width=\textwidth]{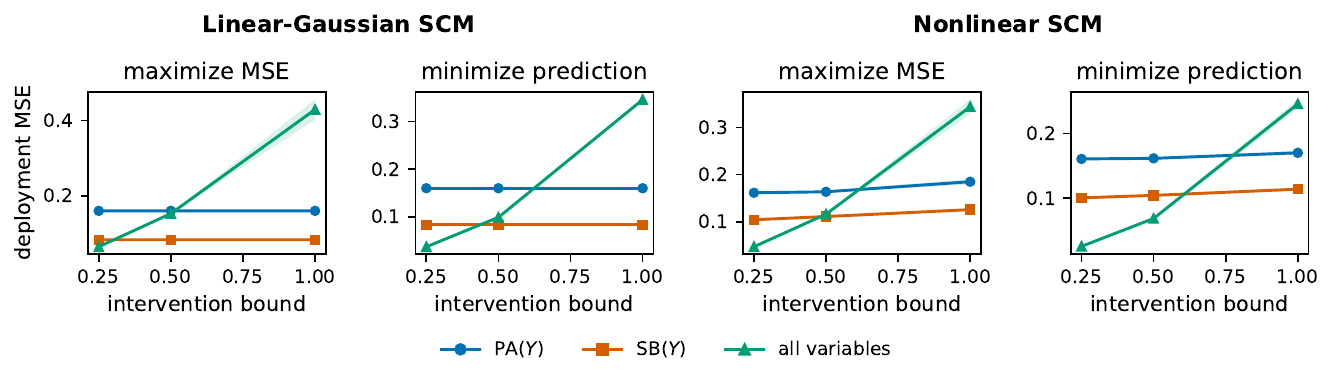}
    \caption{Synthetic prediction-intervention games. Left two panels:
    linear-Gaussian SCM with population regression predictors. Right two panels:
    nonlinear SCM with predictors estimated from finite data. The stable blanket
    is uniformly better than the parent set and both are substantially more
    stable than the all-variable predictor. In the nonlinear SCM, small
    deviations from perfectly constant MSE are due to finite-sample regression
    error.}
    \label{fig:synthetic_main}
\end{figure*}

\subsection{Prediction-intervention games on Causal Chambers} \label{sec:new_chamber_experiments}
We complement the synthetic experiments of \Cref{sec:synthetic_experiments} with a prediction-intervention game on real data from the Causal Chambers\footnote{The Causal Chambers are devices controlled through an API and collect measurements from a physical system with known causal ground truth.} light tunnel (\cref{fig:causal_chambers_dags} left) of \citet{gamellaCausalChambersRealworld2025}.

\paragraph{Setup.} 
The chamber takes a light source setting $\mathtt{RGB} \coloneq (\mathtt{red}, \mathtt{green}, \mathtt{blue}) \in \mathbb{R}^3$ and returns infrared and visible-light measurements ($\mathtt{ir\_j}$, $\mathtt{vis\_j}$) at positions $\mathtt{j} \in \{1,2,3\}$ along the chamber.
We aim to predict $Y \in \{0,1\}$ (constructed %
as a function of $\mathtt{ir\_1}$) 
and observe the system under different environments. 
The leader's loss is the MSE; the follower's objective is to minimize the population mean prediction $\Vfoll_e(\hat{f}) = -\E_e[\hat{f}(X)]$, as considered in \Cref{thm:PA_vs_SB_game}.
Given $\hat{f}$, the follower can query the Causal Chambers API to receive $500$ observations of $X$ under different interventions on $\mathtt{ir\_3}$ and $\mathtt{vis\_3}$. We consider a fixed collection of $49$ candidate environments the follower can choose from. We restrict their choice with an intervention bound $b \in [0, 1]$: only the $\max\{1,\lceil 49b \rceil \}$
environments closest to a reference environment $e_\mathrm{ref}$ are available.
All details can be found in \Cref{app:additional_details_causal_chambers}.

\begin{figure}[t]
    \centering
    \begin{minipage}{0.25\textwidth}
        \centering
        \includegraphics[width=\linewidth]{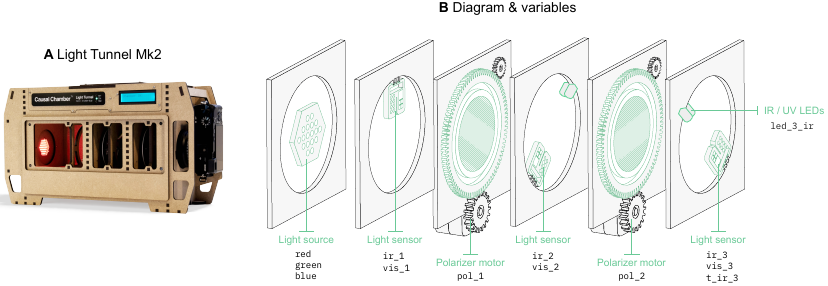}
    \end{minipage}
    \hfill
    \begin{minipage}{0.35\textwidth}
        \centering
        \input{dag_2}
    \end{minipage} \hfill
    \begin{minipage}{0.35\textwidth}
        \centering
        \input{dag_6}
    \end{minipage}
    \caption{
    Left: light tunnel 
    (adapted from \citet{gamellaCausalChambersRealworld2025}, licensed under CC BY 4.0).
    Middle: inaccurate graph, 
    inferred from the partial causal-graph information by
    \citet{gamellaCausalChambersRealworld2025}; 
    the follower is allowed to intervene on 
    $\mathtt{ir\_3}, \mathtt{vis\_3}$;
    the stable blanket of $Y$ is $\{\mathtt{RGB}\}$. Right: potentially correct graph with a hypothesized hidden confounder $H$ pointing to all $\mathtt{ir\_j}$ and $\mathtt{vis\_j}$; the stable blanket of $Y$ is $\{\mathtt{RGB}, \mathtt{ir\_2}, \mathtt{vis\_2}\}$.}
    \label{fig:causal_chambers_dags}
\end{figure}

\paragraph{Inaccurate causal graph.}
\citet{gamellaCausalChambersRealworld2025} provide partial causal-graph
information for the light tunnel. 
They note that absent edges do not necessarily exclude the existence of
causal effects or hidden confounding.
Ignoring this note and adding the edges from $Y$ and $E$ to
$(\mathtt{ir\_3}, \mathtt{vis\_3})$, one might be tempted to consider the graph in \cref{fig:causal_chambers_dags} (middle) as ground truth.
However, 
there is empirical evidence that this
is not the correct graph for our experiment:
conditional independence tests on training data reject\footnote{Specifically, $\mathtt{ir\_i} \indep \mathtt{ir\_j} \given \mathtt{RGB}$ and $\mathtt{vis\_i} \indep \mathtt{vis\_j} \given \mathtt{RGB}$ for $\mathtt{i} \ne \mathtt{j} \in \{1, 2, 3\}$ are rejected in the majority of training environments at level $0.01$; see \Cref{app:hidden_confounding}.} some of the implied independencies between sensors. %
We
hypothesize that this is due to a hidden confounder (such as background
lighting) acting on the sensors. Adding it results in \cref{fig:causal_chambers_dags} (right). Under this extended graph with hidden variables the stable blanket of $Y$
can be defined analogously \citep{xiangStableBlanketHidden2026} and
equals $\{\mathtt{RGB}, \mathtt{ir\_2}, \mathtt{vis\_2}\}$.

\begin{figure}[t]
    \centering
    \includegraphics[width=1\linewidth]{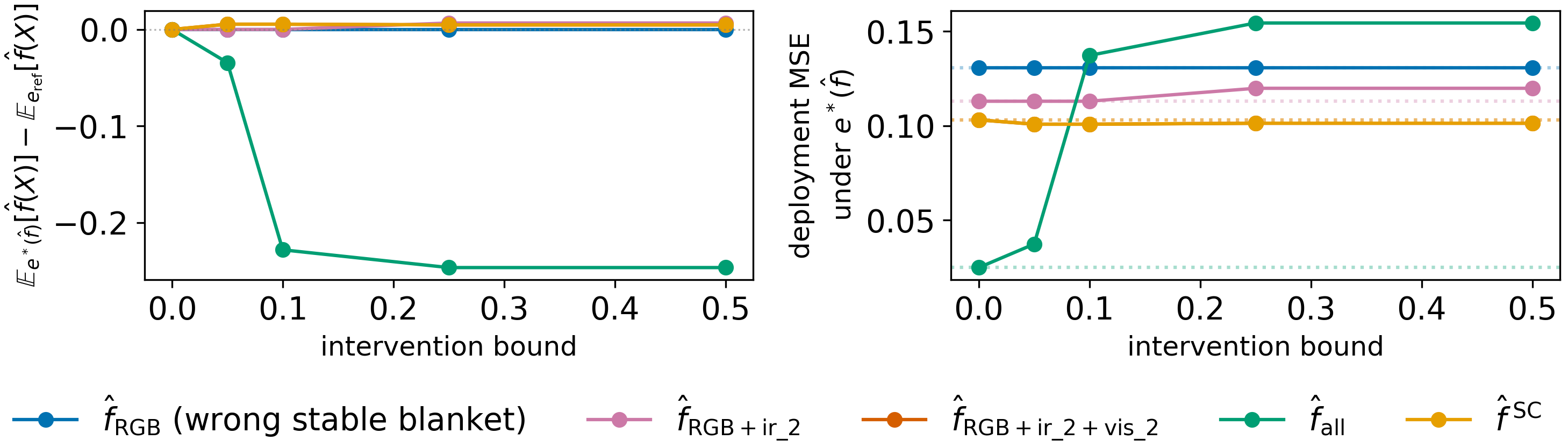}
    \caption{%
        Prediction-intervention game on Causal Chambers data (\Cref{sec:new_chamber_experiments}). %
        Follower-induced shift in the population mean prediction relative to a reference environment $e_\mathrm{ref}$ (left) and
        MSE of the leader's deployed predictor %
        in the follower's chosen environment (right).
 The follower has a strong influence on their objective only for $\hat{f}_\text{all}$.
$\hat{f}^\mathrm{SC}$ achieves the lowest MSE under the follower's chosen environment, tied with $\hat{f}_{\mathtt{RGB}, \mathtt{ir\_2}, \mathtt{vis\_2}}$
(curves overlap); 
it does not require graph knowledge and can thus not be deceived by wrong assumptions. 
}
    \label{fig:follower_chambers}
\end{figure}
 
\paragraph{Results.}
\cref{fig:follower_chambers} 
shows 
the degree to which the follower can improve their objective (left), and the leader's MSE under the environment chosen by the follower (right), both computed on $500$ new evaluation observations.
Only under $\hat{f}_\text{all}$ (using all variables) can the follower improve their objective (left), and for large intervention bounds, this predictor performs worst for the leader. 
$\hat{f}_{\mathtt{RGB}}$ is the stable blanket predictor under the inaccurate graph and 
has a higher MSE than the larger subsets that include $\mathtt{ir\_2}$ and $\mathtt{vis\_2}$ on the reference environment. This is consistent with the hidden-confounding hypothesis that the graph is incorrect. 
The SC predictor $\hat{f}^\mathrm{SC}$, which is purely based on data, selects $\{\mathtt{RGB}, \mathtt{ir\_2}, \mathtt{vis\_2}\}$ (the stable blanket under the hidden-confounder graph) as the unique subset passing both the
invariance and predictiveness filters, and therefore matches the performance
of $\hat{f}_{\mathtt{RGB}, \mathtt{ir\_2}, \mathtt{vis\_2}}$; thus, $\hat{f}^\mathrm{SC}$ cannot be deceived
by inaccurate assumptions on the causal graph.

\section{Discussion and future work} \label{sec:conclusion}

We have introduced prediction-intervention games, connected them to
invariant sets and worst-case distribution generalization, and proved
optimality results for strategies based on the stable blanket. We proposed learning methods that the leader may use for settings in which the causal graph is known and settings in which it is not, and have successfully tested them on data.
One limitation of our results is that they
rely on the assumption of strong interventions;
analyzing optimal strategies under weaker or
more general conditions is an interesting direction for future research.
We hypothesize that solving prediction-intervention games is in
general a hard problem \citep[e.g.,][]{ConitzerS06}.
Extending the follower model also seems worthwhile, for example by
adding constraints on the allowed interventions or
considering a population of heterogeneous followers with different
objectives.
Finally, in practice, the leader uses a finite-sample estimate $\hat f_S$ rather than
the population predictor $f_S$, and the follower can shift covariates to high-error regions
(\Cref{app:synthetic_regression_error}); a finite-sample analysis is left for future work.

\begin{ack}
We thank Niklas Pfister and Sorawit Saengkyongam for helpful discussions, and Juan Gamella for the help with the Causal Chambers data generation. During this work, LK was
supported by the ETH AI Center through an ETH AI Center doctoral fellowship.
\end{ack}

\vskip 0.2in
\bibliographystyle{plainnat}
\bibliography{references}

\appendix
\clearpage

\addcontentsline{toc}{section}{Appendix}

\listofappendices
\vskip 0.75in

\appsection{Extended related work} \label{app:extended_related_works}

We expand on the related work discussion from \Cref{sec:intro}.
 
\paragraph{Strategic classification and performative prediction.}
Strategic classification \citep{brucknerStackelbergGamesAdversarial2011, hardtStrategicClassification2016} models followers who respond to a deployed classifier $f$ by modifying their features with some cost $c$ to obtain a more favorable prediction, typically through the pointwise best response $x^f \in \argmax_{x'} f(x') - c(x, x')$. The leader's goal is to learn an $f$ that is robust under the induced distribution shift. The leader's and follower's objectives are typically asymmetric in this Stackelberg formulation; \citet{ehrenbergAdversariesIncentivesStrategic2024} make this explicit and study a game in which the leader minimizes a worst-case loss over a set of different utility-maximizing opponents, without a causal structure over $X$ and $Y$. A related question is how the choice of predictor creates different incentives for the followers and distinguishes gaming the system 
from genuine improvement:
\citet{kleinbergHowClassifiersInduce2019} model followers as investing effort into different actions that affect the observed features, and ask which decision rules incentivize followers to invest in actions a leader views as valuable rather than in gaming; \citet{chenLinearClassifiersThat2021} distinguish improvable, manipulable, and immutable features, with gaming defined as changes to manipulable features that by assumption do not affect $Y$. They do not specify a full SCM over $(X, Y, E)$ that determines how the follower's actions propagate through the whole system. Performative prediction \citep{perdomo2020performative, hardtPerformativePredictionFuture2025} considers the broader phenomenon that deployed models shift the data distribution; \citet{mendler-dunnerAnticipatingPerformativityPredicting2022} address identifiability of the causal effect of predictions. These works do not generally specify the follower's moves through an SCM.
Other game-theoretic interactions between predictors and strategic agents have been studied in market settings \citep{huIncentivizingHighQualityContent2023}.

\paragraph{Causal Stackelberg formulations.}
Several works model Stackelberg games with SCMs. 
\citet{shavitCausalStrategicLinear2022} formulate strategic linear regression in a linear SCM, and learn linear predictors targeting one of three specific objectives: minimizing the leader's risk under strategic follower responses, incentivizing followers to genuinely improve their outcomes, or estimating the parameters of the outcome model. Their algorithms deploy multiple predictors and observe the resulting follower responses to learn the final predictor. In contrast, our predictor is defined without observing follower interventions. We further allow a general SCM, cover regression and classification under any loss strictly consistent for the mean functional, and place no specific functional form on the follower's response: their setup fixes it through an effort conversion matrix and a quadratic cost, whereas ours constrains the follower only through the graph $\G$ and the action sets $A_j$.
\citet{miller2020strategic} develop a causal framework that distinguishes gaming from improvement and prove that designing classifiers which incentivize improvement is at least as hard as orienting the edges of the underlying causal graph. Their contribution is conceptual and concerns the difficulty of incentive design, while we focus on the leader's optimal choice of $f$ given an SCM.
\citet{yanDiscoveringOptimalScoring2023} assume an unknown causal graph and follower manipulations constrained by a fixed budget. Their algorithm iteratively deploys predictors and observes strategic responses to recover the graph, and their leader objective trades off prediction accuracy with outcome improvement. Our predictor does not require iterated deployment, and in the unknown-graph case (\Cref{sec:learning_subset_predictors}) we use multiple training environments instead of observed strategic responses.
\citet{horowitzCausalStrategicClassification2023} partition covariates into causal and non-causal predictive components. Their follower is an individual who modifies covariates to change the prediction in its favor, and only changes to causal variables propagate to $Y$. 
We, instead, allow for a more flexible model of the follower:  we may specify which variables the follower may intervene on, and which other variables their interventions may depend on. The latter can induce additional edges in the graph, so our setup accommodates settings in which the deployment graph extends the training graph. Unlike the work by \citet{horowitzCausalStrategicClassification2023}, we do not estimate the causal function of the response but work with invariant sets; furthermore, we admit regression in addition to classification, allow for more general losses 
and for more general follower utilities; finally, \citet{horowitzCausalStrategicClassification2023} do not provide optimality results in their setting.
\citet{kulynychCausalPredictionCan2022} extends an SCM over $(X,Y)$ by including the deployed predictor as an additional node $M$, with edges $M \to X$ but no edge to $Y$. Instead of modeling a follower optimizing its own utility, the deployment-induced shift in $X$ is encoded directly in the structural assignments of the SCM. In this setting, they study conditions under which the Markov-blanket predictor remains optimal under the shift it induces. In our formulation, the post-deployment distribution instead arises from a follower choosing interventions to maximize an objective distinct from the leader's risk, which the leader anticipates when choosing $f$.

\paragraph{Causal distribution generalization and invariant subsets.}
Invariance-based causal prediction \citep{petersCausalInferenceUsing2015, magliacaneDomainAdaptation2018, christiansenCausalFrameworkDistribution2021} uses subsets $S$ for which $Y \given X_S$ is stable across environments. \citet{pfisterStabilizingVariableSelection2021} introduce the stable blanket, 
which we use in several places in the paper.
For regression, invariant subsets have been motivated by worst-case performance: \citet{rojas-carullaInvariantModelsCausal2018} show that for each invariant subset $S$, $f_S$ minimizes the worst-case squared loss over the family $\mathcal{Q}_S$ of distributions sharing the conditional $Y \given X_S$. With Prop.~\ref{prop:adversarial} 
in \Cref{app:rojas}, we generalize this result to binary classification and regression with any loss strictly consistent for the mean functional. 
This result is not directly applicable to the optimization problem \eqref{eq:worst-case}, as
the family $\mathcal{Q}_S$ depends on $S$ and may strictly contain $\{\P_e\}_{e \in \Env}$, in which case $f_S$ guards against shifts that cannot occur under $\Env$.  The result
does not inform us about which invariant subset to use, either.
We instead analyze worst-case optimality directly over $\{\P_e\}_{e \in \Env}$ and give conditions under which the stable blanket is optimal among all measurable predictors.
Our worst-case analysis is related to a line of work taking a distributional-robustness view in which the uncertainty set comes from SCM interventions \citep{meinshausenCausalityDistributional2018, buhlmannInvarianceCausalityRobustness2020, subbaswamyUnifyingCausalFramework2022, shenCausalityorientedRobustness2023}; this connects to the broader DRO literature \citep{Kuhn_Shafiee_Wiesemann_2025, sagawa_groupDRO, freni2025maximum}.
Through the bound~\eqref{eq:worst_case_obj}, prediction-intervention games provide a concrete example of such a worst-case problem with the distribution family $\{\P_e\}_{e \in \Env}$ determined by the SCM. These works do not directly apply to our setting, however: their uncertainty sets (interventions of a fixed form on a fixed DAG or balls around a reference distribution in the case of DRO) generally do not coincide with $\{\P_e\}_{e \in \Env}$. Anchor regression \citep{rothenhauslerAnchorRegressionHeterogeneous2020} attains worst-case optimality over a class of bounded shift interventions. Other function-view criteria include IRM \citep{arjovskyInvariantRiskMinimization2020} and instrument-based independent residuals \citep{saengkyongamExploitingIndependentInstruments2022}. We do not use a function view but a subset view, as discussed in \Cref{app:worst-case-opt-functions}.

\appsection{Additional theoretical results}

\subsection{Subset-level formulation} \label{app:subset_formulation}

\begin{remark} \label{rem:subset_formulation}
The following subset-level result follows from Lem.~\ref{lem:stable_region_opt}: for all $S \subseteq \DEYforb^C$,
\begin{equation} \label{eq:rem8}
      \sup_{e_1, e_2 \in \Env}
  \E_{e_1} \Big[\ell \big(Y,  \E_{e_2}[Y \given X_S]\big)\Big]
  \geq
  \sup_{e \in \Env}
  R_e(f_{\SB(Y)}).
\end{equation}
By \Cref{thm:worst_case}, \Cref{eq:rem8}
holds for all
$S \subseteq \{1, \ldots, d\}$ under Asm.~\ref{as:A1} or~\ref{as:A2}.  Consequently, under faithfulness,
\citet[Thm.~3.5]{pfisterStabilizingVariableSelection2021}
follows as a corollary, since every subset $S$ d-separating $Y$ and $E$ satisfies $S \subseteq \DEYforb^C$ by part~(a) of their proof.
\end{remark}

\subsection{Worst-case optimal functions}\label{app:worst-case-opt-functions}
In this work, we study predictors $f_S(x) = \E[Y \given X_S = x_S]$ defined by an invariant subset $S$ (the `subset view'). \Cref{thm:worst_case} shows that, under Asm.~\ref{as:A1} or~\ref{as:A2}, $f_{\SB(Y)}$ is worst-case optimal over all measurable functions $f\colon \mathcal{X} \to \mathcal{A}$ (the `function view'). We outline three reasons why the subset view is the right choice in our setting.

(1)
To solve Problem~\eqref{eq:worst-case}, Invariant Risk Minimization (IRM) \citep{arjovskyInvariantRiskMinimization2020} takes a function view and jointly learns a representation $\Phi\colon \mathcal{X} \to \mathcal{H}$ and a predictor $w\colon \mathcal{H} \to \mathcal{A}$. The representation $\Phi$ is required to be invariant in the sense that the optimal predictor $w$ on top of $\Phi$ is the same across environments. While failure cases of IRM are known \citep{rosenfeldRisksInvariantRisk2021, kamathDoesInvariantRisk2021}, we use the $0$--$1$ loss in the following example to make the point that IRM's invariance criterion can be weaker than the conditional invariance $Y \indep E \given X_S$ of Def.~\ref{def:inv_subset}.

\begin{example}[Weak IRM invariance under $0$--$1$ loss] \label{ex:irm-vs-subset}
Consider the following SCM\\[0pt]
\begin{minipage}{0.49\textwidth}
\begin{align*}
    X_1 &\coloneq \bm{1}\{\varepsilon_1 \leq E/10\}, \\
    Y &\coloneq \bm{1}\{\varepsilon_Y \leq 1/4 + X_1/2\}, \\
    X_2 &\coloneq Y \oplus \bm{1}\{\varepsilon_2 \leq E/10\},
\end{align*}
\end{minipage}
\begin{minipage}{0.49\textwidth}
\centering
\vspace*{0.7em}
\input{dag_5}
\end{minipage}\\[6pt]
where $\varepsilon_1, \varepsilon_Y, \varepsilon_2 \sim \mathrm{Unif}([0,1])$, mutually independent, and $\oplus$ denotes addition modulo $2$. Set $\Envtr = \{1, 2, 3\}$ and $\Env = \Envtr \cup \{8\}$.

Subset method: only $S = \{1\}$ is invariant, with $\P_e(Y = 1 \given X_1 = 1) = 3/4$ and $\P_e(Y = 1 \given X_1 = 0) = 1/4$ for all $e \in \Env$. The classifier $f_{\{1\}}$ achieves $3/4$ accuracy in every environment.

IRM: consider the representation $\Phi(x) \coloneq x_2$. Although $\P_e(Y = 1 \given X_2 = 1)$ varies across $\Envtr$, it stays above $1/2$ in every training environment, so the classifier $w = \mathrm{id}$ is Bayes-optimal under the $0$--$1$ loss in all $e \in \Envtr$.
The resulting accuracy ranges from $9/10$ to $7/10$ across $\Envtr$, exceeding the $3/4$ of $f_{\{1\}}$ on average. In the test environment $e = 8$, the accuracy of $w \circ \Phi$ drops to $1/5$.
\end{example}

In Ex.~\ref{ex:irm-vs-subset}, IRM accepts $\Phi$ as invariant because the Bayes-optimal classifier under the $0$--$1$ loss does not change across $\Envtr$, although $Y \notindep E \given \Phi(X)$. This illustrates that function-based invariance criteria can be weaker than the invariance $Y \indep E \given X_S$ of Def.~\ref{def:inv_subset}. Here, $w \circ \Phi$ is outperformed by the subset-based $f_{\{1\}}$ in worst-case accuracy on $\Env$. However, one could tackle this by 
modifying the original IRM as follows: we could replace
the $0$--$1$ loss with a loss strictly consistent for the mean functional in IRM: under such a loss, IRM would reject $\Phi$ since $\E_e[Y \given \Phi(X)]$ varies across $\Envtr$. In this case, the representation $X \mapsto X_1$ would still satisfy the IRM criterion.

(2) The subset view and the function view differ in the class of test interventions they are robust to. For example, if $E$ has two children, the function view typically requires that the joint distribution of these children shifts with $E$ on $\Env$ 
as 
it does on $\Envtr$;
a subset method, by contrast, often comes with guarantees even when the two children of $E$ are intervened on separately; we make this precise with the following example, where the function view exploits a pattern on $\Envtr$ that does not persist on $\Env$.
\begin{example}[Worst-case suboptimality of an invariant $f$] \label{ex:strict-inequality}
Let $\mathcal{Y} = \mathbb{R}$, $\ell(y,q) = (y-q)^2$, and consider the following SCM over $(X_1, X_2, Y, E)$ whose structural assignments on $\Envtr$ are\\[0pt]
\begin{minipage}{0.49\textwidth}
\begin{align*}
    Y &\coloneq \varepsilon_Y, \\
    X_1 &\coloneq \delta_E + Y + \varepsilon_1, \\
    X_2 &\coloneq \delta_E + \varepsilon_2, \\
    \text{where }& \delta_E \in \mathbb{R} \text{ and } \varepsilon_Y, \varepsilon_1, \varepsilon_2 \stackrel{\text{i.i.d.}}{\sim} \mathcal{N}(0,1).
\end{align*}
\end{minipage}
\begin{minipage}{0.49\textwidth}
\centering
\vspace*{1.5em}
\input{dag_4}
\end{minipage}\\[6pt]
Here, $\SB(Y) = \varnothing$, so $f_{\SB(Y)} = \E[Y] = 0$. The representation $\Phi(X) \coloneq X_1-X_2 = Y + \varepsilon_1 - \varepsilon_2$ is independent of $E$ on $\Envtr$, and the predictor $f(x) \coloneq \E[Y \given \Phi = x_1-x_2] = (x_1-x_2)/3$ is optimal for the squared loss on top of $\Phi$. For all $e \in \Envtr$, $R_e(f) = \frac{2}{3}< 1 = R_e(f_{\SB(Y)})$.

On $\Envtr$, $f$ satisfies several invariance properties: (i) $Y \indep E \given f(X)$, and thus $\E_e[Y \given \Phi(X)]$ does not depend on $e \in \Envtr$, so $\Phi$ satisfies the IRM invariance criterion of \citet[Def.~3]{arjovskyInvariantRiskMinimization2020}; (ii) $E \indep f(X)$; (iii) invariant residuals $Y - f(X) \indep E$, the invariance notion used by \citet{saengkyongamExploitingIndependentInstruments2022}; and (iv) invariance of the risks.

We now extend the SCM to $\Env$: for all $e\in\Envtr$, let $\Env$ contain an environment $e'$ in which the structural assignment of $X_1$ is replaced by $X_1\coloneq\tilde\varepsilon_1\sim\mathcal{N}(0,1)$, independent of $(Y,\varepsilon_1,\varepsilon_2)$, while the assignment of $X_2$ is unchanged. Then Asm.~\ref{as:A2} holds. In environment $e'$, $f(X)=(\tilde\varepsilon_1-\delta_{e'}-\varepsilon_2)/3$ is independent of $Y$. Furthermore, 
$R_{e'}(f) = 1 + (2 + \delta_{e'}^2)/9 > 1 = R_{e'}(f_{\SB(Y)}), $
and hence, $\sup_{e \in \Env}R_e(f) > \sup_{e\in\Env} R_e(f_{\SB(Y)})$, that is, Inequality~\eqref{eq:worst_case_statement} is strict. 
\end{example}

In Ex.~\ref{ex:strict-inequality}, $f$ exploits a pattern on $\Envtr$ that does not extend to $\Env$. Our setup in \Cref{sec:problem_setup} makes few structural assumptions on the interventions, and in this regime the subset view is robust under the intervention class considered here. \Cref{thm:worst_case} formalizes this. If the interventions are restricted to a known class, the function view can instead be preferable: anchor regression \citep{rothenhauslerAnchorRegressionHeterogeneous2020}, for instance, is worst-case optimal over a specific class of bounded shift interventions. But in this paper, we focus on the more general case.

(3)
The subset view is arguably more interpretable and transparent: it combines feature selection with standard supervised learning, rather than an opaque function of all covariates. In regulated settings such as credit scoring or clinical decision support, this simplifies auditing and certifying that prohibited features are not used, and makes individual predictions easier to explain. Only the variables in the chosen invariant subset $S \subseteq \{1, \ldots, d\}$ need to be measured at deployment.
Finally, the subset view can make the follower's potential actions
more transparent, as the choice of which subset of covariates is used to construct $f$ influences the followers' behavior: as in \Cref{sec:intro}, using only causal features (such as safety equipment) means a follower can change their prediction only by genuinely changing $Y$, whereas including manipulable non-causal features (such as car color) lets them change the prediction without changing $Y$.

\subsection{Generalization to Ex.~\ref{ex:star_counterexample}} \label{sec:generalized_example}
The following proposition generalizes
Ex.~\ref{ex:star_counterexample}.
\begin{proposition} \label{prop:star_necessary}
Assume $\mathcal{Y} = \mathbb{R}$. Let $\G$ be a DAG over $(X_1, \ldots, X_d, Y, E)$ with $E$ a source node and no edge $E \to Y$, and let $\bar{\G}$ be the induced subgraph of $\G$ on $(X_1, \ldots, X_d, Y)$. If \eqref{eq:star} fails, then there exists an SCM with graph $\G$ and induced distribution $\P$, and a set of environments $\Env$
such that
(i) there exists $e_0 \in \Env$ under which $\P_{e_0}$ is faithful w.r.t.\ $\bar{\G}$,
(ii) the second part of Asm.~\ref{as:A2} holds (that is, for all $e \in \Env$ there exists an environment $e' \in \Env$ for which every $j \in \CH_\mathrm{int}(Y)$ is set independently of all other variables, with all other structural assignments unchanged), and
(iii) there exists a measurable $f \colon \mathcal{X} \to \mathbb{R}$ with $R_e(f) < R_e(f_{\SB(Y)})$ for all $e \in \Env$.
\end{proposition}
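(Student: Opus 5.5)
The idea is to exploit a non-intervened child of $Y$ that lies downstream of an intervened child. We build a linear-Gaussian SCM in which this child is a strong, almost noise-free proxy for $Y$, while every other edge is weak. The predictor $f$ will then use $X_k$, and a continuity argument in the small edge weight shows that it beats $f_{\SB(Y)}$ in every environment.

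\textbf{Setup.} Since \eqref{eq:star} fails, there is a node $k\in\CH(Y)\cap\DE(\CH_\mathrm{int}(Y))$ with $k\notin\CH_\mathrm{int}(Y)$. We fix this $k$ and consider the following SCM with graph $\G$, parametrized by $\eta>0$:
\begin{itemize}
\item every node gets independent $\mathcal{N}(0,1)$ noise;
\item the edge $Y\to X_k$ has coefficient $1$;
\item every other edge among $(X,Y)$ has coefficient $\eta\,c_{uv}$, with $c_{uv}$ drawn generically, e.g.\ from a continuous distribution;
\item each child of $E$ receives an environment-dependent additive shift.
\end{itemize}
We take $\Env=\{e_0,e_1\}$, where $e_1$ agrees with $e_0$ except that every $j\in\CH_\mathrm{int}(Y)$ is replaced by $X_j\coloneq\tilde\varepsilon_j\sim\mathcal{N}(0,1)$, independent of everything else. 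Then the second part of Asm.~\ref{as:A2} holds with $e'=e_1$ for both $e\in\Env$, which gives (ii). For (i), we use that for a linear-Gaussian SCM, violations of faithfulness to $\bar\G$ occur only on a Lebesgue-null set of coefficients, since they are zeros of nontrivial polynomials. Hence, for each fixed $\eta>0$, almost every choice of $c$ makes $\P_{e_0}$ faithful.

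\textbf{Step 1: $f_{\SB(Y)}$ has the same risk in both environments.} First, $\SB(Y)\cap\DEYforb=\varnothing$. Indeed, a parent of a non-forbidden child $c$ cannot lie in $\DEYforb$, because otherwise $c$ itself would be a descendant of $\CH_\mathrm{int}(Y)$ and hence forbidden. Next, no node of $\SB(Y)\cup\{Y\}$ is a descendant of any $j\in\CH_\mathrm{int}(Y)$, since any such descendant lies in $\DEYforb$. The only mechanisms that differ between $e_0$ and $e_1$ are those of $\CH_\mathrm{int}(Y)$. Therefore the joint law of $(X_{\SB(Y)},Y)$ is identical under $\P_{e_0}$ and $\P_{e_1}$, and $R_{e_0}(f_{\SB(Y)})=R_{e_1}(f_{\SB(Y)})$.

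\textbf{Step 2: the limit $\eta\to0$.} We take the candidate predictor $f(x)=x_k/2$. In both environments, as $\eta\to0$ the covariance matrix of $(X,Y)$ converges to the one in which the only dependence is $X_k=Y+\varepsilon_k$; all other nodes become independent of $Y$. All second moments are polynomial in $\eta$, so the following quantities are continuous in $\eta$:
\begin{itemize}
\item $R_e(f)=\E_e[(Y-X_k/2)^2]$, which tends to $1/2$;
\item $R_e(f_{\SB(Y)})=\Var(Y\mid X_{\SB(Y)})$, which tends to $\Var(Y)=1$, because $k\notin\SB(Y)$ and every coefficient linking $Y$ to $\SB(Y)$ is $O(\eta)$.
\end{itemize}
Since $\Env$ has only two environments, there is an $\eta_0>0$ such that for all $\eta<\eta_0$ and both $e\in\Env$ we have $R_e(f)<3/4<R_e(f_{\SB(Y)})$. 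This gives (iii).

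\textbf{Step 3: combining.} We first fix some $\eta<\eta_0$, so that (iii) holds for every choice of $c$. For this $\eta$, we then pick a generic $c$ outside the null set of faithfulness violations, so that (i) also holds.

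\textbf{Main obstacle.} The delicate part is the interplay between uniform strict improvement and faithfulness. Faithfulness forbids setting the weak coefficients to exactly zero, so the argument must go through continuity in $\eta$ rather than an exact computation. It also requires that the exceptional set for faithfulness is null for each fixed $\eta$. The other point needing care is Step 1's invariance of the joint law of $(X_{\SB(Y)},Y)$, which rests on the graphical fact that $\SB(Y)$ contains no descendants of $\CH_\mathrm{int}(Y)$.
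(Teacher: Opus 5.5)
Your construction works in outline, but it takes a different route from the paper.

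\textbf{How the two routes differ.} The paper fixes an arbitrary faithful linear-Gaussian SCM with unit noise variances and generic nonzero coefficients. It allows essentially arbitrary interventions on $\CH(E)$, provided $\Env$ contains $e_0$ and the strong-intervention environments. Its key observation is that the offending child $i$ is not intervened on. So the residualized variable $\bar X_i = X_i-\sum_{k\in\PA(i)\setminus\{Y\}} b_{i,k}X_k = b_{i,Y}Y+\varepsilon_i$ has the same relation to $Y$ in every environment, with $\varepsilon_i\indep(Y,X_{\SB(Y)},E)$. Hence $f=\E[Y\given X_{\SB(Y)},\bar X_i]$ is itself invariant, and by Lem.~\ref{lem:bayes_restricted} it beats $f_{\SB(Y)}$ exactly in every environment, with no limits involved.

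You instead tune the SCM so that $X_k$ is an almost clean proxy of $Y$, use the raw predictor $x_k/2$, and get strict improvement by continuity at $\eta=0$ on a two-point $\Env$. This is more elementary, gives an explicit predictor, and suffices because the proposition is existential. It is, however, much more fragile. It needs specially tuned coefficients and finitely many (or uniformly controlled) environments: a strong intervention, such as a large shift, on a node of $\CH_\mathrm{int}(Y)$ that is an ancestor of $X_k$ can make $x_k/2$ arbitrarily bad. The paper's residualization removes exactly those contributions. It thereby shows that the failure of \eqref{eq:star} is not an artifact of parameter tuning.

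\textbf{Two steps need repair.}

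First, the quantifiers in Step~3 are wrong as written. The $\eta_0$ from Step~2 depends on $c$. For any fixed $\eta$, large entries of $c$ along a directed path from $\CH_\mathrm{int}(Y)$ into $X_k$ make $\E_e[(Y-X_k/2)^2]$ large. Meanwhile $R_e(f_{\SB(Y)})\le \Var(Y\given X_{\PA(Y)})=1$ always holds. So ``(iii) holds for every choice of $c$'' fails if $c$ comes from an unbounded distribution. You need to confine $c$ to a compact set such as $[-1,1]^m$ and use joint continuity of both risks in the coefficient vector to get a uniform $\eta_0$.

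Second, the standard genericity result gives a null unfaithful set in the full coefficient space. You apply it on the slice $\{b_{k,Y}=1\}$ with all noise variances equal to $1$. On that slice it is not automatic that the relevant polynomials remain nontrivial.

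Both issues disappear if you argue by openness. By continuity, (iii) holds in both environments for all coefficient vectors in a small open neighbourhood of the limit point ($b_{k,Y}=1$, all other coefficients $0$), with $b_{k,Y}$ also left free. This neighbourhood has positive Lebesgue measure, so it contains a faithful parameter, and faithfulness forces all coefficients to be nonzero.
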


A proof can be found in \Cref{app:proof_prop:star_necessary}.

\subsection[Generalizing the result by Rojas-Carulla et al. (2018)]{Generalizing the result by \citet{rojas-carullaInvariantModelsCausal2018}}
\label{app:rojas}

The following proposition generalizes Thm.~4 by \citet{rojas-carullaInvariantModelsCausal2018}.

\begin{proposition}[Adversarial optimality] \label{prop:adversarial}
Let $S$ be an invariant subset of covariates, and let
$N \coloneq \{1, \ldots, d\} \setminus S$. Let $\mu$ be a dominating measure
on $\mathcal{X}$ such that $\mu = \mu_S \otimes \mu_N$ for some
measures $\mu_S$ on $\mathcal{X}_S$ and $\mu_N$ on $\mathcal{X}_N$
(such as Lebesgue or counting measures). Define $\mathcal{Q}_S$ to be
the family of all joint distributions $\Q$ of $(X,Y)$ on
$\mathcal{X} \times \mathcal{Y}$ such that
\begin{enumerate}[label=(\roman*)]
  \item $\E_\Q[Y \given X_S] = f_S(X_S)$ \quad $\Q$-almost surely,
  \item the marginal distribution of $X$ under $\Q$ is absolutely
        continuous with respect to $\mu$.
\end{enumerate}
Then the predictor $f_S$ satisfies
$$
  f_S \in
  \argmin_{\substack{f:\,\mathcal{X}\to\mathcal{A}\\
                      \mathrm{measurable}}}\;
  \sup_{\Q \in \mathcal{Q}_S}\,
  \E_{(X,Y)\sim\Q}\!\big[\ell(Y, f(X))\big].
$$
\end{proposition}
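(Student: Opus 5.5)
The plan is to show that $\sup_{\Q \in \mathcal{Q}_S} \E_\Q[\ell(Y,f(X))] \geq \sup_{\Q \in \mathcal{Q}_S} \E_\Q[\ell(Y,f_S(X_S))]$ for every measurable $f$. I would do this by constructing, for each $\Q \in \mathcal{Q}_S$, a distribution $\Q^\ast \in \mathcal{Q}_S$ with two properties. First, $\Q^\ast$ gives $f_S$ the same risk as $\Q$. Second, under $\Q^\ast$ the covariates $X_N$ carry no information about $Y$ beyond $X_S$, so that $\E_{\Q^\ast}[Y \given X] = f_S(X_S)$.

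The natural construction is to decouple $X_N$ from $(X_S,Y)$. Let $\Q^\ast$ be the law of $(X_S,Y)$ under $\Q$, tensorized with an independent draw $X_N \sim \nu$. Here $\nu$ is any fixed distribution on $\mathcal{X}_N$ that is absolutely continuous w.r.t.\ $\mu_N$; for example, take the $X_N$-marginal of $\Q$, which is absolutely continuous w.r.t.\ $\mu_N$ because $\Q_X \ll \mu_S \otimes \mu_N$. The properties of $\Q^\ast$ follow in turn:
\begin{enumerate}[label=(\alph*)]
\item Condition (ii) holds for $\Q^\ast$. Its $X$-marginal is $\Q_{X_S} \otimes \nu$, and both factors are absolutely continuous w.r.t.\ their respective reference measures, again using that $\Q_X \ll \mu$ forces $\Q_{X_S} \ll \mu_S$. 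This is precisely where the product structure $\mu = \mu_S \otimes \mu_N$ is needed.
\item Condition (i) holds because the law of $(X_S,Y)$ is unchanged, so $\E_{\Q^\ast}[Y \given X_S] = f_S(X_S)$.
\item By independence, $\E_{\Q^\ast}[Y \given X] = \E_{\Q^\ast}[Y \given X_S] = f_S(X_S)$ holds $\Q^\ast$-a.s.
\end{enumerate}

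Strict consistency of $\ell$ then does the comparison. Conditioning on $X$ and applying the defining inequality pointwise, with $F$ the conditional law of $Y$ given $X=x$, gives
\[
\E_{\Q^\ast}[\ell(Y,f(X))] \;\geq\; \E_{\Q^\ast}[\ell(Y,f_S(X_S))] \;=\; \E_{\Q}[\ell(Y,f_S(X_S))],
\]
where the equality holds because the risk of $f_S$ depends only on the law of $(X_S,Y)$. Taking the supremum over $\Q$ on both sides, and noting that $\Q^\ast \in \mathcal{Q}_S$, yields the claim. I would also note that $f_S$ takes values in $\mathcal{A}$, since the conditional mean of $Y$ lies in the convex hull of $\mathcal{Y}$.

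The main obstacle is technical rather than conceptual. The pointwise use of strict consistency requires regular conditional distributions and finiteness of the conditional expectations. I would handle this with the standing finiteness assumption on risks: if $\E_{\Q^\ast}[\ell(Y,f(X))] = \infty$, the inequality is trivial. Otherwise the conditional risks are finite $\Q^\ast_X$-a.s., and the defining inequality of strict consistency applies with $F = \Q^\ast_{Y \given X=x}$ for a.e.\ $x$; regular conditional distributions exist since all spaces are Polish. I would then integrate over $x$.
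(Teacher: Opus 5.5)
Your proposal is correct and is essentially the paper's proof. The paper uses the same decoupled measure $\tilde{\Q} = \Q_{(X_S,Y)} \otimes \Q_{X_N}$, verifies membership in $\mathcal{Q}_S$ exactly as you do (including the use of $\mu = \mu_S \otimes \mu_N$ for condition (ii)), and compares risks via strict consistency. The only cosmetic difference is in that last step: the paper applies Tonelli and its strict-consistency lemma with $W = X_S$ to each section $x_S \mapsto f(x_S, x_N)$ for fixed $x_N$, whereas you apply the lemma once with $W = X$ after noting $\E_{\Q^\ast}[Y \given X] = f_S(X_S)$ by independence, and the two are equivalent.
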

A proof can be found in \Cref{app:adversarial}.

\appsection{Stabilized regression and classification}

\subsection{Invariance tests for classification} \label{app:invariance_tests}

We now provide detailed descriptions of the invariance tests that can be used for the methods described in \Cref{sec:learning_subset_predictors}. All tests target the conditional independence $H_{0,S}\colon Y \indep E \given X_S$ and return a $p$-value used as the invariance score $s_\mathrm{inv}(S)$. Each test can be used with different base models (such as logistic regression, random forests, or gradient boosting).

\subsubsection{Residual-based tests}

Residual-based tests fit a model $\hat{f}$ for $\E[Y \given X_S]$ on data pooled across environments, compute residuals $R = Y - \hat{f}(X_S)$, and test whether the distribution of $R$ depends on $E$.

\paragraph{Invariant Residual Distribution Test (IRD).}
Tests whether the mean of $R$ varies across environments using a one-way ANOVA $F$-test \citep[adapted from][]{heinze-demlInvariantCausalPrediction2018}.

\paragraph{\textsc{tram}-GCM.}
Applies the generalized covariance measure \citep{shahHardnessConditionalIndependence2018} to the residuals of both predicting $Y$ from $X_S$ and $E$ from $X_S$, testing whether their expected product is zero \citep{kookModelbasedCausalFeature2023}.

\subsubsection{Predictive-performance-based tests}

Predictive-performance-based tests check whether $E$ carries additional predictive information about $Y$ beyond $X_S$, or vice versa.

\paragraph{Invariant Target Prediction Test (ITP).}
Compares the ROC AUC of a classifier predicting $Y$ from $(X_S, E)$ against one using $X_S$ and permuted observations of $E$ via DeLong's test \citep{delongComparingAreasTwo1988} for correlated ROC curves \citep{heinze-demlInvariantCausalPrediction2018, salas-porrasIdentifyingCausesPyrocumulonimbus2022}. A significant AUC decrease upon permuting $E$ rejects invariance.

\paragraph{Invariant Environment Prediction Test (IEP).}
Reverses the direction and tests whether $Y$ helps predict $E$ given $X_S$. It compares the cross-entropy loss of a model predicting $E$ from $(X_S, Y)$ against one using $X_S$ and permuted observations of $Y$ via a paired $t$-test \citep[adapted from][]{heinze-demlInvariantCausalPrediction2018}.

\subsection{Algorithm} \label{app:SC_alg}

Stabilized regression \citep{pfisterStabilizingVariableSelection2021} uses a two-stage procedure to determine the weights $\hat{\omega}_S$ for $S \subseteq \{1, \ldots, d\}$. 
We now propose
stabilized classification, which is directly ported from stabilized regression and proceeds in the same way.

\paragraph{Stage 1: invariance.} First, we estimate the collection of invariant subsets by computing an invariance score $s_\mathrm{inv}(S)$ for each subset $S \subseteq \{1, \ldots, d\}$. Then, we define the empirically invariant subsets as $\hat{\mathcal{I}}_{\Envtr} \coloneq \big\{S \subseteq \{1, \ldots, d\} \given s_\mathrm{inv}(S) \geq \alpha_\mathrm{inv} \big\}$, where the cutoff $\alpha_\mathrm{inv} \in \mathbb{R}$ is a tuning parameter. \Cref{sec:learning_subset_predictors} discusses suitable scores $s_\mathrm{inv}(S)$ for this step. Because iterating over all $2^d$ subsets becomes computationally prohibitive even for moderate $d$, an optional variable screening step (e.g., using $\ell_1$-penalized logistic regression) can be applied to reduce the candidate covariate set before computing the invariance scores.

\paragraph{Stage 2: prediction performance.}
We estimate the subsets with best prediction performance among those in $\hat{\mathcal{I}}_{\Envtr}$. We evaluate the prediction performance of each $S \in \hat{\mathcal{I}}_{\Envtr}$ using a score $s_\mathrm{pred}(S)$, such as the negative binary cross-entropy loss on the training data. We then define the final set of selected subsets as $\hat{\mathcal{C}}_{\Envtr} \coloneq \big\{ S \in \hat{\mathcal{I}}_{\Envtr} \given s_\mathrm{pred}(S) \geq c(\alpha_\mathrm{pred}) \big\}$, where $c(\alpha_\mathrm{pred})$ is a cutoff threshold. Following \citet{pfisterStabilizingVariableSelection2021}, we determine $c(\alpha_\mathrm{pred})$ using a bootstrap procedure. Let $S_\mathrm{max} \coloneq \argmax_{S \in \hat{\mathcal{I}}_{\Envtr}} s_\mathrm{pred}(S)$ denote the invariant subset with best prediction performance. We generate $B$ bootstrap samples from the training data, compute the prediction scores for $S_\mathrm{max}$ on each sample, and set $c(\alpha_\mathrm{pred})$ to the $\alpha_\mathrm{pred}$-quantile of these bootstrap scores. Therefore, the hyperparameters $\alpha_\mathrm{inv}, \alpha_\mathrm{pred} \in (0,1)$ control the size of the ensemble. Finally, we use uniform weights $\hat{\omega}_S$ over the selected subsets in $\hat{\mathcal{C}}_{\Envtr}$: $\hat{\omega}_S \coloneq \frac{1}{|\hat{\mathcal{C}}_{\Envtr}|}$ for $S \in \hat{\mathcal{C}}_{\Envtr}$ and $\hat{\omega}_S \coloneq 0$ otherwise. Algorithm~\ref{alg:SC} summarizes the full procedure.

\begin{algorithm}[t]
\caption{Stabilized classification (SC) \label{alg:SC}}
\begin{algorithmic}[1]
\Require data $\mathcal{D} = (x_i, y_i, e_i)_{i=1}^n$ from $\Envtr$; hyperparameters $\alpha_\mathrm{inv}, \alpha_\mathrm{pred} \in (0,1)$, $B \in \mathbb{N}$
\State (optional) perform variable screening to reduce $d$ \label{line:var_screening}
\For{all $S \subseteq \{1, \ldots, d\}$}
    \State compute invariance score $s_\mathrm{inv}(S)$ using $\mathcal{D}$
\EndFor
\State $\hat{\mathcal{I}}_{\Envtr} \gets \big\{S \subseteq \{1, \ldots, d\} \given s_\mathrm{inv}(S) \geq \alpha_\mathrm{inv} \big\}$ \Comment{invariance}
\For{all $S \in \hat{\mathcal{I}}_{\Envtr}$}
    \State fit base classifier $\hat{f}_S$ on $\mathcal{D}$
    \State compute predictive score $s_\mathrm{pred}(S)$ on $\mathcal{D}$
\EndFor
\State $S_\mathrm{max} \gets \argmax_{S \in \hat{\mathcal{I}}_{\Envtr}} s_\mathrm{pred}(S)$
\State compute cutoff $c(\alpha_\mathrm{pred})$ as the $\alpha_\mathrm{pred}$-quantile of $B$ bootstrap scores for $S_\mathrm{max}$
\State $\hat{\mathcal{C}}_{\Envtr} \gets \big\{S \in \hat{\mathcal{I}}_{\Envtr} \given s_\mathrm{pred}(S) \geq c(\alpha_\mathrm{pred}) \big\}$ \Comment{prediction performance}
\State assign weights $\hat{\omega}_S \gets \frac{1}{|\hat{\mathcal{C}}_{\Envtr}|} \bm{1}\big\{S \in \hat{\mathcal{C}}_{\Envtr} \big\}$
\item[\textbf{Output:}] base estimators $\hat{f}_S$ and weights $\hat{\omega}_S$ defining the ensemble $\hat{f}^{\mathrm{SC}}$
\end{algorithmic}
\end{algorithm}

\appsection{Proofs}\label{app:proofs}

The following lemma serves as a generalization of the orthogonality of the conditional expectation (used for regression ($Y \in \mathbb{R}$) with the MSE) to regression and binary classification with loss functions strictly consistent for the mean functional. It is used for several results.

\begin{lemma}
\label{lem:bayes_restricted}
Let $Y$ and $W$ be random variables on a probability space
$(\Omega, \mathcal{F}, \P)$, with $Y$ taking values in $\mathcal{Y}$
and integrable.
Let $Z$ be a $\sigma(W)$-measurable random variable
taking values in $\mathcal{A}$. If $\ell$ is strictly consistent for the mean
functional, then
\begin{equation*}
  \mathbb{E}\bigl[\ell(Y,Z)\bigr]
  \;\geq\;
  \mathbb{E}\bigl[\ell\bigl(Y,\mathbb{E}[Y\given W]\bigr)\bigr],
\end{equation*}
where $\E[\cdot]$ denotes expectation under $\P$.
If both expectations are finite, equality holds if and only if
$Z=\mathbb{E}[Y\given W]$ almost surely.
\end{lemma}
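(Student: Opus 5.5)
We reduce the statement to the defining property of strict consistency, applied conditionally on $W$. Write $\mu(W) \coloneqq \E[Y \given W]$, a $\sigma(W)$-measurable version of the conditional expectation. Since $Y$ is integrable, it has a regular conditional distribution $F_w$ of $Y$ given $W = w$. This distribution exists because $\mathcal{Y} \subseteq \mathbb{R}$ is Polish. For $\P_W$-almost every $w$, $F_w$ has finite mean $\E_{F_w}[Y] = \mu(w)$.

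Because $Z$ is $\sigma(W)$-measurable, the Doob--Dynkin lemma gives a measurable $g$ with $Z = g(W)$. By the disintegration (tower property) for nonnegative integrands, which is valid since $\ell \geq 0$ and so needs no integrability assumption,
\begin{equation*}
  \E[\ell(Y,Z)] = \int \E_{F_w}\bigl[\ell(Y, g(w))\bigr]\, \P_W(dw),
  \qquad
  \E[\ell(Y,\mu(W))] = \int \E_{F_w}\bigl[\ell(Y, \mu(w))\bigr]\, \P_W(dw).
\end{equation*}

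Next we compare the two integrands pointwise in $w$. Strict consistency gives $\E_{F_w}[\ell(Y,\mu(w))] \le \E_{F_w}[\ell(Y,g(w))]$ for almost every $w$. The definition only covers $F$ for which the relevant expectations are finite. If the right-hand side is $+\infty$, the inequality is trivial. The remaining worry is that the left-hand side is infinite while the right-hand side is finite. We handle this by reading the definition as asserting the inequality whenever the right-hand side is finite, which is the standard convention in Gneiting's framework. Integrating then yields the stated inequality.

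For the equality claim, suppose both expectations are finite and equal. Then the integrand difference $\E_{F_w}[\ell(Y,g(w))] - \E_{F_w}[\ell(Y,\mu(w))]$ is almost surely nonnegative and finite, and it integrates to zero. Hence it vanishes for almost every $w$. Strict consistency at each such $w$ then forces $g(w) = \mu(w)$, so $Z = \E[Y\given W]$ almost surely. The converse direction is immediate.

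The main obstacle is measure-theoretic bookkeeping rather than any real difficulty. It has three parts: justifying the regular conditional distribution, ensuring $\mu(w) \in \mathcal{A}$ so that it is an admissible action, and handling the finiteness caveats in the definition of strict consistency. For the second part, in classification $\mu \in [0,1]$ almost surely, and in regression $\mathcal{A} = \mathbb{R}$.
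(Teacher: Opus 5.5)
Your proposal is correct and follows essentially the same route as the paper: regular conditional distribution of $Y$ given $W$, pointwise strict consistency (with $Z$ acting as a constant given $W$, which you make explicit via Doob--Dynkin and disintegration over the law of $W$ rather than conditional expectations), integration, and the equality case from a nonnegative difference with zero integral. Your explicit treatment of the case $\E_{F_w}[\ell(Y,\mu(w))]=\infty$ with a finite competitor is something the paper's proof passes over silently, since it simply asserts unique minimization for every $F$ with finite mean. Reading the definition as you do is exactly what the argument, the paper's included, needs.
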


\begin{proof}[Proof of Lem.~\ref{lem:bayes_restricted}]
For all $q \in \mathcal{A}$ and probability distributions $F$ on $\mathcal{Y}$ with finite mean $\E_{Y \sim F}[Y]$, define
$$
L(F, q) \coloneq \E_{Y \sim F}[\ell(Y, q)].
$$
Let $F_W$ denote the conditional distribution of $Y$ given $W$ \citep[Thm.~8.5]{kallenbergFoundationsModernProbability2021}. Its mean equals $\E[Y \given W]$ almost surely. Conditional on $W$, the random variable $Z$ acts as a constant since $Z$ is $\sigma(W)$-measurable, so almost surely, 
$$
\E[\ell(Y, Z) \given W] = L(F_W, Z).
$$

By strict consistency of $\ell$, for every distribution $F$ with finite mean,
$q \mapsto L(F, q)$ is uniquely minimized at $q = \E_{Y \sim F}[Y]$.
Since this holds for every $F$, it holds pointwise for the random
conditional distribution $F_W$ for almost every $\omega \in \Omega$.
Thus, almost surely, 
$$
  L(F_W, Z) \geq L(F_W, \E[Y \given W])
  =
  \E\big[\ell(Y, \E[Y \given W]) \given W\big],
$$
with equality if and only if $Z = \E[Y \given W]$ almost surely. Taking
expectations, $\E[\ell(Y,Z)] \geq \E[\ell(Y,\E[Y \given W])]$. For the equality
case, assume both expectations are finite and equal. Then the
nonnegative random variable
$D := \E[\ell(Y,Z)\given W] - \E[\ell(Y,\E[Y \given W])\given W]$ satisfies
$\E[D] = 0$, hence $D = 0$ almost surely, and pointwise strictness
gives $Z = \E[Y \given W]$ almost surely. The converse is immediate.
\end{proof}

\subsection{Proof of Lem.~\ref{lem:stable_region_opt}}
We first state and prove a lemma. The d-separation statement
for $\SB(Y)$ is also shown in the proof of
\citet[Thm.~3.5]{pfisterStabilizingVariableSelection2021}; we include
the argument for completeness and extend it to $\DEYforb^C$.

\begin{lemma}\label{lemma:d-sep}
The subsets $\DEYforb^C$ and $\SB(Y)$ satisfy the d-separations
$$
  Y \dsep{\G} E \given \DEYforb^C
  \qquad\text{and}\qquad
  Y \dsep{\G} E \given \SB(Y).
$$
\end{lemma}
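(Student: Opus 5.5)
We argue directly via paths in $\G$: every path between $Y$ and $E$ is shown to be blocked by the conditioning set in question. Since $E$ is a source node with no edge $E \to Y$, any such path leaves $E$ through an edge $E \to X_j$ with $j \in \CH(E)$. Its other end reaches $Y$ through an edge that is either $X_k \to Y$ with $k \in \PA(Y)$, or $Y \to X_k$ with $k \in \CH(Y)$. Suppose a path were d-connecting given a set $Z$. Then every non-collider on it lies outside $Z$, and every collider lies in $Z$ or has a descendant in $Z$.

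\emph{First statement, $Z = \DEYforb^C$.} Note that $Z$ contains $\PA(Y)$, because no parent of $Y$ descends from a child of $Y$ in a DAG. So if the path ends with $X_k \to Y$, the node $k \in \PA(Y)$ is a non-collider in $Z$, and the path is blocked. Hence the path ends with $Y \to X_k$. If $k \notin \DEYforb$, then $k \in Z$. Either $k$ is a non-collider, and the path is blocked, or $k$ is a collider. In the collider case, walk along the path from $k$ toward $E$ until reaching a first node not belonging to $\DEYforb$ (the endpoint $E$ itself qualifies). The key observation is that $\DEYforb$ is closed under descendants and $Z$ is its complement. Consider now the remaining case $k \in \DEYforb$, so $k$ is a non-collider outside $Z$. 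Follow the path from $Y$: as long as edges point away from $Y$, all nodes stay in $\DEYforb$, and they are non-colliders outside $Z$. Eventually either the path reaches $E$ while still moving forward, or it meets a collider. The first is impossible: a forward edge into $E$ would have to point into a source node. So there is a collider $c \in \DEYforb$. Both $c$ and all its descendants lie in $\DEYforb$, hence outside $Z$, and the path is blocked at $c$. The same argument handles the case $k \notin \DEYforb$: the node after $k$ cannot be reached, since $k$ is in $Z$ and blocks the path as a non-collider. If instead $k$ is a collider, the path continues $Y \to k \leftarrow W$, and we consider the subpath from $W$ to $E$ via the same style of analysis.

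To avoid this messy case analysis, I would instead use the cleaner route of the moralized ancestral graph criterion. Let $\mathrm{An}$ denote the ancestral set of $\{Y, E\} \cup Z$. Because $Z$ is the complement of the descendant-closed set $\DEYforb$, this ancestral set is all nodes except possibly some in $\DEYforb$. Moralize, delete $Z$, and check that $Y$ and $E$ are disconnected. The only nodes remaining besides $Y$ and $E$ lie in $\DEYforb \cap \mathrm{An}$. Every edge from $Y$ into the remaining nodes goes to some child in $\DEYforb$. Moral edges between $Y$ and such nodes arise only from a common child, which would again lie in $\DEYforb$. It then remains to check that no node of $\DEYforb$ is adjacent to $E$ in the moral graph. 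A direct edge $E \to c$ is excluded by definition unless $c \in \CH_\mathrm{int}(Y)$. This is the \textbf{main obstacle}: intervened children of $Y$ are adjacent to both $E$ and $Y$, and $Y$ and $E$ are married through each of them. The resolution is that such a child $c$ is in the ancestral set only if it is an ancestor of $Z \cup \{Y, E\}$. But $c$ cannot be an ancestor of $Y$ (acyclicity) or of $E$ (a source), nor of $Z$ (descendant-closure of $\DEYforb$). So $c$ is not in the ancestral set, and symmetric reasoning removes every node of $\DEYforb$. The moral graph restricted to the ancestral set then contains only $Z \cup \{Y, E\}$, and after removing $Z$ there is no $Y$–$E$ edge, because no edge $E \to Y$ exists and no common child survives.

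\emph{Second statement, $Z = \SB(Y)$.} The same criterion applies. The ancestral set of $\SB(Y) \cup \{Y, E\}$ again excludes $\DEYforb$: no element of $\SB(Y)$ descends from $\CH_\mathrm{int}(Y)$, because $\PA(Y)$ and the parents of non-forbidden children are ancestors of $Y$ or of non-forbidden nodes, and descendant-closure gives the claim. In the moral graph, the neighbours of $Y$ are then parents, non-forbidden children, and co-parents of non-forbidden children, all contained in $\SB(Y)$. Hence $Y$ is isolated after removing $\SB(Y)$, which establishes both d-separations.
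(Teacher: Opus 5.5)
Your proof is correct, but it rests on the moralization (ancestral moral graph) criterion, whereas the paper argues directly by blocking paths.

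The paper fixes an arbitrary $E$--$Y$ path and splits on the node $i$ adjacent to $Y$:
\begin{itemize}
\item If $i\in\PA(Y)$, the path is blocked.
\item If $i\in\CH(Y)\cap\DEYforb$, the first collider seen from $Y$ is reached along a directed segment. It therefore lies in the descendant-closed set $\DEYforb$ and has no descendant in $S$.
\item If $i\in\CH(Y)\setminus\DEYforb$, then $i\in S$. If $i$ is a collider, its other neighbour $j$ is a non-collider in $\PA(\CH(Y)\setminus\DEYforb)\subseteq S$, and $j\neq E$ because $\CH(E)\cap\CH(Y)=\CH_{\mathrm{int}}(Y)$.
\end{itemize}
This handles both conditioning sets in one pass (it in effect only uses $\SB(Y)\subseteq S\subseteq\DEYforb^C$) and needs nothing beyond the definition of d-separation.

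Your route instead shows that $\DEYforb$ is never ancestral to $Y$, to $E$, or to any node of $\DEYforb^C$, so it drops out of the relevant ancestral graph. The first claim then becomes immediate: only $Y$ and $E$ survive deletion of $Z$, and their only possible common children lie in $\CH_{\mathrm{int}}(Y)$. The second claim reduces to the observation that $\SB(Y)$ is exactly the moral neighbourhood of $Y$ in the pruned graph. This gives a clean conceptual reading of \eqref{eq:stable_blanket_decomp} as a Markov blanket after pruning $\DEYforb$, at the cost of invoking the moralization theorem as a black box.

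A few small points:
\begin{itemize}
\item In the second part, the claim ``co-parents of non-forbidden children are contained in $\SB(Y)$'' tacitly uses that $E$ is never such a co-parent. This follows from the same fact you used in the first part, $\CH(E)\cap\CH(Y)=\CH_{\mathrm{int}}(Y)\subseteq\DEYforb$, and should be stated again there.
\item In the first part, the intermediate sentence saying that an edge $E\to c$ with $c\in\DEYforb$ forces $c\in\CH_{\mathrm{int}}(Y)$ is false if read literally. Take $Y\to X_1\to X_2$ with $E\to X_1$ and $E\to X_2$. It is harmless, since you then show that no node of $\DEYforb$ lies in the ancestral set at all, which is the only fact needed.
\item The path-based sketch you open with is left unfinished: the collider case $Y\to k\leftarrow W$ with $k\notin\DEYforb$ is never closed, and $\SB(Y)$ is not treated. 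Since you abandon it, this does not affect correctness. That case actually closes in one line: $W\neq E$, and $W\notin\DEYforb$ by descendant-closure, so $W$ is a non-collider in $Z$. That is precisely the paper's argument.
\end{itemize}
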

\begin{remark} \label{rem:SB_in_different_DAGs}
    The statement and proof of Lem.~\ref{lemma:d-sep} only use that $E$ is a source node in $\G$ with no edge $E \to Y$, and apply unchanged to any DAG over $(X_1, \ldots, X_d, Y, E)$ with these properties, provided $\DEYforb$ and $\SB(Y)$ are computed in that DAG.
\end{remark}

\begin{proof}[Proof of Lem.~\ref{lemma:d-sep}]
We prove both statements simultaneously. Let
$S \in \{\DEYforb^C,\, \SB(Y)\}$ and recall that
$\PA(Y) \subseteq S \subseteq \DEYforb^C$ from \eqref{eq:stable_blanket_decomp}. Consider an arbitrary path from $E$
to $Y$ in $\G$. Since $E$ is a source node and not a parent of $Y$,
the path contains an intermediate node $i$ adjacent to $Y$. We
distinguish two cases.

\paragraph{Case 1. $E \rightarrow ~\cdots~ i \rightarrow Y$.}
Then $i \in \PA(Y) \subseteq S$, so the path is blocked.

\paragraph{Case 2. $E \rightarrow ~\cdots~ i \leftarrow Y$.}
Since $E$ is a source node, the path contains at least one collider.
\begin{itemize}
  \item If $i \in \DEYforb$: let $k$ be the collider closest to
    $Y$ on the path (possibly $k = i$). The segment from $Y$ to $k$ is
    directed $(Y \rightarrow i \rightarrow \cdots \rightarrow k)$, so $k \in \DE(i) \cup \{i\}$. Since
    $i \in \DEYforb$, and because descendants of nodes in
    $\DEYforb$ are again in $\DEYforb$, we have
    $\DE(k) \cap \DEYforb^C = \varnothing$. Hence neither $k$ nor any
    descendant of $k$ lies in $S \subseteq \DEYforb^C$, and the path
    is blocked.
  \item If $i \in \DEYforb^C$: then
    $i \in \CH(Y) \cap \DEYforb^C \subseteq S$. If $i$ is not a
    collider, the path is blocked because $i \in S$. If $i$ is a
    collider, the path has the form
    $E \to \cdots j \to i \leftarrow Y$ for some $j$ with $i \neq j \neq E$ (if
    $j = E$, then $i \in \CH(E) \cap \CH(Y) = \CH_\mathrm{int}(Y)$,
    contradicting $i \in \DEYforb^C$). Then
    $j \in \PA(\CH(Y) \cap \DEYforb^C) \subseteq \SB(Y)$ by
    \eqref{eq:stable_blanket_decomp}, and $j \in \DEYforb^C$ since
    $\DEYforb$ is closed under descendants. Thus $j \in S$, and since
    $j$ is a non-collider on the path, the path is blocked.
\end{itemize}
\end{proof}

\begin{proof}[Proof of Lem.~\ref{lem:stable_region_opt}]
By Lem.~\ref{lemma:d-sep} and the global Markov property,
$\DEYforb^C$ and $\SB(Y)$ are invariant subsets. In particular, for
all $e \in \Env$, $\E_e[Y \given X_{\DEYforb^C}]$ and
$\E_e[Y \given X_{\SB(Y)}]$ do not depend on~$e$. By definition of
$\SB(Y)$ \citep[Def.~3.4]{pfisterStabilizingVariableSelection2021}, for all $j \in \DEYforb^C \setminus \SB(Y)$ the
d-separation $j \dsep{\G} Y \given \SB(Y)$ holds. Thus,
$ \big(\DEYforb^C \setminus \SB(Y)\big) \dsep{\G} Y \given \SB(Y)$, so by the global Markov property,
$$
  X_{\DEYforb^C \setminus \SB(Y)} \indep Y \given X_{\SB(Y)}
$$
holds under $\P_e$ for all $e \in \Env$.
Doob's conditional independence property
\citep[Thm.~8.9]{kallenbergFoundationsModernProbability2021} then
yields that almost surely,
\begin{equation} \label{eq:Doob}
  \E[Y \given X_{\DEYforb^C}]
   = 
  \E[Y \given X_{\SB(Y)}, X_{\DEYforb^C \setminus \SB(Y)}]
   = 
  \E[Y \given X_{\SB(Y)}]
   = 
  f_{\SB(Y)}(X).
\end{equation}

Let $e \in \Env$ and let $f\colon \mathcal{X} \to \mathcal{A}$ be
$\sigma(X_{\DEYforb^C})$-measurable. Applying
Lem.~\ref{lem:bayes_restricted} under $\P_e$ with $W = X_{\DEYforb^C}$
and $Z = f(X)$ gives
$$
  R_e(f)
   \geq 
  \E_e \bigl[\ell(Y,  \E[Y \given X_{\DEYforb^C}])\bigr]
   \stackrel{\eqref{eq:Doob}}{=} 
  R_e(f_{\SB(Y)}).
$$
Since $e$ and $f$ were arbitrary, we obtain the result.
\end{proof}

\subsection{Proof of \texorpdfstring{\Cref{thm:PA_vs_SB_game}}{Theorem~\ref{thm:PA_vs_SB_game}}}
\begin{proof}[Proof of \Cref{thm:PA_vs_SB_game}]
For (i), $\Env^*(f) = \argmax_{e \in \Env} R_e(f)$, so $\sup_{e \in \Env^*(f)} R_e(f) = \sup_{e \in \Env} R_e(f)$ for every $f$. By Lem.~\ref{lem:stable_region_opt}, $R_e(f_{\SB(Y)}) \leq R_e(f_{\PA(Y)})$ for all $e \in \Env$, and taking suprema gives the claim.

For (ii), since $\PA(Y)$ and $\SB(Y)$ are invariant, the tower property gives $\E_e[f_{\PA(Y)}(X)] = \E_e[f_{\SB(Y)}(X)] = \E_e[Y]$ for all $e \in \Env$. Hence $\Vfoll_e(f_{\PA(Y)}) = \Vfoll_e(f_{\SB(Y)})$ for all $e$, which implies $\Env^*(f_{\PA(Y)}) = \Env^*(f_{\SB(Y)})$. Therefore
\[
    \sup_{e \in \Env^*(f_{\SB(Y)})} R_e(f_{\SB(Y)})  \leq  \sup_{e \in \Env^*(f_{\PA(Y)})} R_e(f_{\PA(Y)}),
\]
where the inequality uses Lem.~\ref{lem:stable_region_opt}.
\end{proof}

\subsection{Proof of Lem.~\ref{lem:augmentation}} \label{app:proof_augmentation}
\begin{proof}[Proof of Lem.~\ref{lem:augmentation}]
Since $A_j \subseteq \DEforbg{\G_0}^C \setminus \{Y, E\}$ for all $j$, we have $\CH_{\G}(Y) = \CH_{\G_0}(Y)$, $\CH_{\G}(E) = \CH_{\G_0}(E)$, and $\CH_\mathrm{int}(Y)$ is the same in $\G$ and $\G_0$.

First, we show $\DEforbg{\G} = \DEforbg{\G_0}$. The inclusion $\DEforbg{\G_0} \subseteq \DEforbg{\G}$ holds because adding edges can only enlarge the set of descendants. For the reverse, assume by contradiction that there exists $i \in \DEforbg{\G}$ with $i \notin \DEforbg{\G_0}$. Then, there exists a directed path in $\G$ from some $j \in \CH_\mathrm{int}(Y)$ to $i$ that uses at least one edge only existing in $\G$ (since this path does not exist in $\G_0$). Let $k \to l$ be the first such edge on the path (with $k \in A_l \subseteq \DEforbg{\G_0}^C$). The segment $j \to \cdots \to k$ uses only edges of $\G_0$, so $k \in \DE_{\G_0}(\CH_\mathrm{int}(Y)) \cup \CH_\mathrm{int}(Y) = \DEforbg{\G_0}$. This contradicts $k \in A_l \subseteq \DEforbg{\G_0}^C$.

Second, we show $\SBg{\G} = \SBg{\G_0}$. By \eqref{eq:stable_blanket_decomp}, the stable blanket $\SBg{\G_0}$ is the union of $\PA_{\G_0}(Y)$, $\CH_{\G_0}(Y) \setminus \DEforbg{\G_0}$, and $\PA_{\G_0}(\CH_{\G_0}(Y) \setminus \DEforbg{\G_0})$. Regarding the first set, $\PA_{\G}(Y) = \PA_{\G_0}(Y)$ because no intervention on $Y$ takes place. The second is unchanged in $\G$ since $\DEforbg{\G} = \DEforbg{\G_0}$ and $\CH_{\G}(Y) = \CH_{\G_0}(Y)$. The third is unchanged in $\G$ since every new edge points into $\CH_{\G_0}(E)$, and $\CH_{\G_0}(E) \cap (\CH_{\G_0}(Y) \setminus \DEforbg{\G_0}) = \varnothing$ since $\CH_{\G_0}(E) \cap \CH_{\G_0}(Y) = \CH_\mathrm{int}(Y) \subseteq \DEforbg{\G_0}$. Hence, no new edge enters $\CH_{\G_0}(Y) \setminus \DEforbg{\G_0}$, and $\PA_{\G}(\CH_{\G}(Y) \setminus \DEforbg{\G}) = \PA_{\G_0}(\CH_{\G_0}(Y) \setminus \DEforbg{\G_0})$.

Finally, $Y \dsep{\G} E \given \SBg{\G_0}$ follows since $\SBg{\G} = \SBg{\G_0}$ and by Lem.~\ref{lemma:d-sep} and Rmk.~\ref{rem:SB_in_different_DAGs}, $Y \dsep{\G} E \given \SBg{\G}$.
\end{proof}

\subsection{Proof of Thm.~\ref{thm:worst_case}}

We first state and prove a lemma. 

\begin{lemma}[Asm.~\ref{as:A2} implies
Asm.~\ref{as:A1}]
\label{lem:A2_implies_A1}
If Asm.~\ref{as:A2} holds, then for every $e \in \Env$ the
environment $e'$ provided by Asm.~\ref{as:A2} satisfies both
conditions of Asm.~\ref{as:A1}.
\end{lemma}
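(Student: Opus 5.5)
Fix $e \in \Env$ and let $e'$ be the environment from Asm.~\ref{as:A2}, in which every $j \in \CH_\mathrm{int}(Y)$ receives a structural assignment $X_j \coloneq \eta_j$ with $\eta_j$ independent of all other variables, and all other assignments are as in $e$. We check the two conditions of Asm.~\ref{as:A1} in turn.

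\emph{Condition (i).} We argue that $X_{\SB(Y)}$ is unaffected by the change. By \eqref{eq:stable_blanket_decomp}, $\SB(Y) \subseteq \DEYforb^C$, and $\DEYforb^C$ is closed under taking ancestors: if $k \in \DEYforb^C$ had an ancestor in $\DEYforb$, then $k$ would be a descendant of $\CH_\mathrm{int}(Y)$, a contradiction. Hence $\DEYforb^C$ contains no node of $\CH_\mathrm{int}(Y)$ and no descendant of such a node. In $e$ and $e'$, the variables $X_{\DEYforb^C}$ (together with $Y$, which is not a descendant of $\CH_\mathrm{int}(Y)$ by acyclicity) are generated recursively by identical assignments from identical noise. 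Their joint law therefore coincides under $\P_e$ and $\P_{e'}$, and in particular so does the marginal law of $X_{\SB(Y)}$.

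\emph{Condition (ii).} We show $Y \dsep{\G'} X_{\DEYforb} \given X_{\DEYforb^C}$ in the post-intervention graph $\G'$ of $e'$, where all incoming edges into $\CH_\mathrm{int}(Y)$ have been removed. The global Markov property for the SCM $e'$ then gives the required conditional independence.

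Take a path from $Y$ to some node $k \in \DEYforb$, and truncate it at the first node of $\DEYforb$ it hits. The path then ends at some $k$ whose predecessor $m$ lies in $\DEYforb^C$, so $m$ is conditioned on. If $m$ is a non-collider, the path is blocked. If $m$ is a collider, the path enters $m$ via $k \to m$. This is impossible, because then $m$ would be a descendant of $k \in \DEYforb$, contradicting $m \in \DEYforb^C$. The remaining degenerate case is a path of length one, $Y - k$ with $k \in \DEYforb$.
\begin{itemize}
\item The edge $k \to Y$ is impossible, since $k$ would then be both an ancestor and a descendant of $Y$.
\item If the edge is $Y \to k$, then $k \in \CH(Y) \cap \DEYforb$. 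By \eqref{eq:star}, $\CH(Y) \cap \DE(\CH_\mathrm{int}(Y)) \subseteq \CH_\mathrm{int}(Y)$, so $k \in \CH_\mathrm{int}(Y)$. The edge $Y \to k$ was therefore removed in $\G'$, and no such path exists.
\end{itemize}

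The main obstacle is justifying rigorously that \eqref{eq:star} also rules out the dependence that would otherwise flow through edges $Y \to k$ with $k$ a non-intervened descendant of $\CH_\mathrm{int}(Y)$. This is exactly what the graphical condition is for, so I would double-check the case analysis above. In the same step I would need to confirm that d-separation from the set $X_{\DEYforb}$ holds jointly, which the argument does give, since every path to any node of the set is blocked.
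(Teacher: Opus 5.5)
Your proof is correct. Condition (i) is argued essentially as in the paper: nothing in $\SB(Y)\subseteq\DEYforb^C$ lies downstream of the modified assignments.

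For condition (ii) you take a different and somewhat leaner route. The paper does a case analysis on the node $i$ adjacent to $Y$ on the path:
\begin{itemize}
\item $i\in\PA(Y)$ blocks the path;
\item $i\in\CH(Y)\cap\DEYforb$ is excluded by \eqref{eq:star} together with the edge deletion in $\G'$;
\item $i\in\CH(Y)\cap\DEYforb^C$ blocks as a non-collider. If $i$ is a collider, the path is blocked at its other parent, which lies in $\PA(\CH(Y)\cap\DEYforb^C)\subseteq\SB(Y)\subseteq\DEYforb^C$.
\end{itemize}
You instead look at the last edge before the path first enters $\DEYforb$. Descendant-closure of $\DEYforb$ forces that edge to point into $\DEYforb$, so its tail is a conditioned non-collider. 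Only the direct-adjacency case $Y - k$ then remains, and \eqref{eq:star} with the edge removal finishes it.

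Your route never uses the co-parent part of the decomposition \eqref{eq:stable_blanket_decomp}. It also shows that \eqref{eq:star} is needed only to rule out direct edges $Y\to k$ with $k\in\DEYforb\setminus\CH_\mathrm{int}(Y)$. This answers the worry in your last paragraph: the case analysis is complete. The paper's route keeps everything local to the neighborhood of $Y$, and its blocking node is always a covariate.

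One detail should be made explicit. The predecessor $m$ of $k$ need not be a covariate, because a path in $\G'$ may pass through $E$, e.g.\ $Y\leftarrow p\leftarrow E\to k$ with $k\in\CH(E)\cap\DEYforb\setminus\CH_\mathrm{int}(Y)$. Since $E\notin\DEYforb^C$, your sentence ``so $m$ is conditioned on'' does not literally apply there. This is harmless: $\P_{e'}$ is the law with $E$ fixed to $e'$, so it is Markov with respect to $\G'$ with $E$ deleted. Equivalently, $E$ is conditioned on and, being a source, is a non-collider. You should state this.
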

\begin{proof}[Proof of Lem.~\ref{lem:A2_implies_A1}]
Fix an arbitrary $e \in \Env$ and let $e' \in \Env$ be the corresponding environment provided by
Asm.~\ref{as:A2}, in which for all
$j \in \CH_\mathrm{int}(Y)$, we set $X_j \coloneq \Tilde{\varepsilon}_j$ with $\Tilde{\varepsilon}_j$ independent of all other variables, removing incoming edges to $\CH_\mathrm{int}(Y)$ in $\G$. All other structural assignments are unchanged. Let $\G'$ denote the resulting graph. Unless stated otherwise, all graph-theoretic notation such as $\PA(\cdot)$ refers to the original graph $\G$. Quantities defined with respect to $\G'$ are marked explicitly.

We first verify that Asm.~\ref{as:A1}~(i) holds.
Since $\DEYforb$ is closed under descendants, no node in
$\SB(Y) \subseteq \DEYforb^C$ is a descendant of any node in
$\CH_\mathrm{int}(Y)$. Because only the structural assignments of
$\CH_\mathrm{int}(Y)$ are modified in $e'$, the marginal distribution of
$X_{\SB(Y)}$ is the same under $\P_e$ and $\P_{e'}$.

To verify Asm.~\ref{as:A1}~(ii), we need to verify $Y \indep X_{\DEYforb} \given X_{\DEYforb^C}$ under $\P_{e'}$. Since $\P_{e'}$ is generated by the
modified SCM, it is Markov with respect to $\G'$. It therefore suffices
to show that $Y \dsep{\G'} \DEYforb \given \DEYforb^C$. Consider an arbitrary path in $\G'$ from $Y$ to some $j \in \DEYforb$, and let
$i$ be the node adjacent to $Y$ on this path.
\begin{itemize}
    \item If $Y \leftarrow i$, then $i \in \PA(Y) \subseteq \DEYforb^C$, so the path is blocked by $\DEYforb^C$.
    \item If $Y \to i$ with $i \in \CH(Y) \setminus \DEYforb^C$, then
    $i \in \DEYforb$, and
    condition~\eqref{eq:star} gives
    $i \in \CH_\mathrm{int}(Y)$. But in $\G'$ the edge $Y \to i$ is
    deleted, so this path does not exist.
    \item If $Y \to i$ with $i \in \CH(Y) \cap \DEYforb^C$, then if $i$ is a
    non-collider, the path is blocked since $i \in \DEYforb^C$. If
    $i$ is a collider, the path has the form
    $Y \to i \leftarrow k \cdots $ for some
    $k \in \PA_{\G'}(i) \setminus \{Y\}$. Since $i \in \CH(Y) \cap \DEYforb^C$, the node $i$ is not intervened on, hence $\PA_{\G'}(i) = \PA(i)$. Therefore,  
    $k \in \PA(\CH(Y) \cap \DEYforb^C) \subseteq \SB(Y)
    \subseteq \DEYforb^C$
    by~\eqref{eq:stable_blanket_decomp}. Since $k$ is a non-collider
    on the path, the path is blocked.
\end{itemize}
Since every path is blocked,
$Y \dsep{\G'} \DEYforb \given \DEYforb^C$, and the global Markov
property gives
$Y \indep X_{\DEYforb} \given X_{\DEYforb^C}$ under $\P_{e'}$.
\end{proof}

\begin{proof}[Proof of \Cref{thm:worst_case}]
By Lem.~\ref{lem:A2_implies_A1}, Asm.~\ref{as:A2} implies
Asm.~\ref{as:A1}, so it suffices to prove the claim under
Asm.~\ref{as:A1}. Let $f\colon \mathcal{X} \to \mathcal{A}$ be
an arbitrary measurable predictor. Fix $e \in \Env$ and let
$e' \in \Env$ be the corresponding environment provided by Asm.~\ref{as:A1}.
Applying Lem.~\ref{lem:bayes_restricted} under $\P_{e'}$ with $W = X$
and $Z = f(X)$,
\begin{equation} \label{eq:proper_step}
  R_{e'}(f)
   \geq 
  \E_{e'} \bigl[\ell(Y,  \E_{e'}[Y \given X])\bigr].
\end{equation}
By Asm.~\ref{as:A1}~(ii),
$Y \indep X_{\DEYforb} \given X_{\DEYforb^C}$ under $\P_{e'}$, so by Doob's conditional independence property
\citep[Thm.~8.9]{kallenbergFoundationsModernProbability2021},
$\E_{e'}[Y \given X] = \E_{e'}[Y \given X_{\DEYforb^C}]$ almost surely. Since
$\DEYforb^C$ is invariant by Lem.~\ref{lemma:d-sep} and the global Markov property,
$\E_{e'}[Y \given X_{\DEYforb^C}] = \E[Y \given X_{\DEYforb^C}]$, and
\eqref{eq:Doob} from the proof of Lem.~\ref{lem:stable_region_opt} gives $\E[Y \given X_{\DEYforb^C}] = f_{\SB(Y)}(X)$ almost surely.
Substituting into \eqref{eq:proper_step} yields
$$
  R_{e'}(f)  \geq  R_{e'}(f_{\SB(Y)}).
$$
It remains to show $R_{e'}(f_{\SB(Y)}) = R_e(f_{\SB(Y)})$. Since the marginal distribution of $X_{\SB(Y)}$ and the conditional distribution of $Y \given X_{\SB(Y)}$ are the same under $\P_e$ and $\P_{e'}$ by Asm.~\ref{as:A1}~(i) and the invariance of $\SB(Y)$, the distribution of $(Y, X_{\SB(Y)})$ is the same under $\P_{e'}$ and $\P_e$. Since $\ell(Y, f_{\SB(Y)}(X))$ is a measurable function of
$(Y, X_{\SB(Y)})$, the risks are equal.

Consequently, $\sup_{e'' \in \Env} R_{e''}(f) \geq R_{e'}(f) \geq R_e(f_{\SB(Y)})$.
Since $e$ was arbitrary, taking the supremum over $e$ on the
right-hand side yields \eqref{eq:worst_case_statement}. 
\end{proof}

\subsection{Proof of Prop.~\ref{prop:asymptotic}}

\begin{proof}[Proof of Prop.~\ref{prop:asymptotic}]
Fix an arbitrary predictor $f \colon \mathcal{X} \to \mathcal{A}$, $e \in \Env$, and let $(e'_j)_{j \geq 1}$ be the associated sequence from the assumption.
For all $j$, by Lem.~\ref{lem:bayes_restricted} applied under $\P_{e'_j}$ with
$W = X$ and $Z = f(X)$,
$$ R_{e'_j}(f) \geq R_{e'_j}(\mu_{e'_j}). $$
Since $\SB(Y)$ is invariant, the conditional distribution of
$Y \given X_{\SB(Y)}$ is the same in every environment. Together
with Condition~(i), the joint distribution of $(Y, X_{\SB(Y)})$ is the
same under $\P_e$ and $\P_{e'_j}$. Because
$\ell(Y, f_{\SB(Y)}(X))$ is a measurable function of
$(Y, X_{\SB(Y)})$, it follows that for all $j$,
$$ R_{e'_j} \bigl(f_{\SB(Y)}\bigr) =
  R_e \bigl(f_{\SB(Y)}\bigr). $$
By Condition~(ii), for all $\varepsilon>0$ there exists $N$ such that for all $j \geq N$, $R_{e'_j}\bigl(f_{\SB(Y)}\bigr)-R_{e'_j}(\mu_{e'_j})< \varepsilon$. Consequently,
\[
  R_{e'_j}(f)
  \geq
  R_e\bigl(f_{\SB(Y)}\bigr)
  -
  \Bigl(
    R_{e'_j}\bigl(f_{\SB(Y)}\bigr)-R_{e'_j}(\mu_{e'_j})
  \Bigr)
  \geq R_e\bigl(f_{\SB(Y)}\bigr) - \varepsilon.
\]
Therefore, for all $\varepsilon>0$, we have $\sup_{e'' \in \Env} R_{e''}(f) \geq R_e\bigl(f_{\SB(Y)}\bigr) - \varepsilon$. Letting $\varepsilon \to 0$ and taking the supremum over $e \in \Env$ ($e$ was arbitrary) yields \eqref{eq:worst_case_statement}.
\end{proof}

\subsection{Details for Ex.~\ref{ex:star_counterexample}}
\label{app:proof_ex:star_counterexample}

\begin{proof}[Details for Ex.~\ref{ex:star_counterexample}]
In this example, $X_1 \in \CH(Y) \cap \DE(\CH_\mathrm{int}(Y)) \setminus \CH_\mathrm{int}(Y)$, so Condition~\eqref{eq:star} fails. Since $\SB(Y) = \varnothing$ and $Y \sim \mathrm{Ber}(1/2)$, the stable blanket classifier is constant: $f_{\SB(Y)}= \P(Y=1) = 1/2$. Consequently, for all $e \in \Env$, $R_e(f_{\SB(Y)}) = \E_e[(Y - 1/2)^2] = 1/4$.

Define $f\colon \{0,1\}^2 \to [0,1]$ by $f(0,0) = 0$, $f(1,0) = 1$, $f(0,1) = p$, $f(1,1) = 1-p$. Fix an arbitrary $e \in \Env$. On the event $\{X_2 = 0\}$, we have $X_1 = Y$, so $(Y - f(X_1, 0))^2 = 0$. On the event $\{X_2 = 1\}$, $X_1 = Y \oplus \varepsilon_1 $ with $\varepsilon_1  \sim \mathrm{Ber}(p)$ independent of $(Y, X_2, E)$.
With probability $1-p$, $\varepsilon_1  = 0$, so $X_1 = Y$ and $(Y - f(Y, 1))^2 = p^2$. With probability $p$, $\varepsilon_1  = 1$, so $X_1 = 1-Y$ and $(Y - f(1-Y, 1))^2 = (1-p)^2$. Hence $\E[(Y - f(X_1, 1))^2 \given X_2 = 1] = (1-p)\, p^2 + p\,(1-p)^2 = p(1-p)$, which does not depend on the environment. Combining both cases, for all $e \in \Env$,
$$
  R_e(f) = \E_e \big[(Y - f(X_1, X_2))^2\big] = \P_e(X_2 = 1) \cdot p(1-p) \leq p(1-p).
$$
Since $0 < p < 1/2$, we have $p(1-p) < 1/4 = R_e(f_{\SB(Y)})$.
\end{proof}

\subsection{Proof of Prop.~\ref{prop:star_necessary}}
\label{app:proof_prop:star_necessary}

\begin{proof}
Since \eqref{eq:star} fails, there exists
$i\in\CH(Y)\cap\DE(\CH_\mathrm{int}(Y))\setminus\CH_\mathrm{int}(Y)$. Consequently, $i\notin\CH(E)$, and $i \notin \SB(Y)$ since $i\in\DEYforb$.

Choose nonzero coefficients $b_{j,k}$ for every edge $k\to j$
in $\bar{\G}$, and let
$(\varepsilon_j)_{j\in\{Y,1,\ldots,d\}}$
be jointly independent standard Gaussian random variables.
Define a linear Gaussian SCM with DAG $\bar{\G}$:
\begin{equation}\label{eq:SCM_example}
    Y = \sum_{k\in\PA_{\bar{\G}}(Y)} b_{Y,k}  X_k + \varepsilon_Y,
\qquad
X_j = \sum_{k\in\PA_{\bar{\G}}(j)} b_{j,k}  X_k + \varepsilon_j
\quad (j=1,\ldots,d).
\end{equation}

By the faithfulness result for linear Gaussian SCMs
\citep[Thm.~3.2]{spirtesCausationPredictionSearch2001},
the coefficients can be chosen so that the induced distribution $\P_0$
is faithful to $\bar{\G}$; fix such a choice. 

Extend this to an SCM with DAG $\G$ by adding $E$ (independent of all noises) as a source node, as described in \Cref{sec:problem_setup}. Since we condition on $E$ to
obtain $\P_e$, its marginal distribution is not relevant. Depending on $E$, we replace the structural assignments for all $j\in\CH(E)$ by functions of $(X_{\PA_{\bar{\G}}(j)},\varepsilon_j,\tilde\varepsilon_j)$, with $\tilde\varepsilon_j$ independent of all other variables.
In particular, $\Env = \mathrm{supp}(E)$ contains an environment $e_0$ under
which all assignments coincide with the SCM~\eqref{eq:SCM_example} (thus, (i) holds), as well
as environments in which every $j\in\CH_\mathrm{int}(Y)$ is set to
$X_j\coloneq\tilde\varepsilon_j$,
with all other assignments unchanged (thus, (ii) holds). 

We now construct a predictor with lower worst-case risk than $f_{\SB(Y)}$.
Since $i\notin\CH(E)$, the SCM has the assignment
$X_i=\sum_{k\in\PA(i)} b_{i,k} X_k+\varepsilon_i$ in every environment. Define
\[
\bar{X}_i\coloneq X_i - \sum_{k\in\PA(i)\setminus\{Y\}} b_{i,k}  X_k,
\]
so that $\bar{X}_i=b_{i,Y}  Y+\varepsilon_i$ in every environment.
We claim that none of $Y$, $E$, or $X_{\SB(Y)}$ can be descendants of $i$ in $\G$.
Since $i\in\CH(Y)$, the node $Y$ is a parent of $i$ and hence cannot
also be a descendant in a DAG.
$E$ is a source node and therefore has no ancestors.
Since $i\in\DEYforb$ and all descendants of nodes in $\DEYforb$ are in $\DEYforb$, we have that no node in $\SB(Y)\subseteq\DEYforb^C$ is a descendant of $i$. Unrolling the structural assignments along a topological ordering
of $\G$, each variable is a function of the noise variables of itself
and its ancestors only. Consequently, since $Y$, $E$, and every node in
$X_{\SB(Y)}$ are not descendants of $i$, none of them depend
on $\varepsilon_i$:
\begin{equation}\label{eq:eps_indep}
\varepsilon_i\indep(Y,  X_{\SB(Y)},  E).
\end{equation}

Under $\P_{e_0}$, the distribution of
$Y\given X_{\SB(Y)} = x_{\SB(Y)}$ is Gaussian with mean $m(x_{\SB(Y)})$ and
constant variance $\sigma^2$. Since $\SB(Y)$ is invariant, it holds in every environment that, almost surely,
$$ Y\given X_{\SB(Y)}=x_{\SB(Y)} 
\sim \mathcal{N}\bigl(m(x_{\SB(Y)}), \sigma^2\bigr). $$
Since the SCM on $\bar{\G}$ has independent Gaussian noises with positive variances, the
covariance matrix of $(Y,X_{\SB(Y)})$ under $e_0$ is positive definite. In particular $\sigma^2>0$, since $\sigma^2=0$ would imply that $Y$ is an affine
function of $X_{\SB(Y)}$ almost surely, contradicting the positive definiteness.

The conditional distribution of $(Y, \bar{X}_i)$ given $X_{\SB(Y)}$ is
determined by 
$Y\given X_{\SB(Y)}\sim\mathcal{N}(m,\sigma^2)$ and
$\bar{X}_i\given Y\sim\mathcal{N}(b_{i,Y}Y,1)$, both
environment-independent by invariance of $\SB(Y)$ and \eqref{eq:eps_indep}, respectively.
Hence $Y\given(X_{\SB(Y)}, \bar{X}_i)$ does not depend on $e$ (because of joint Gaussianity), and the following
predictor is well-defined independently of the environment:
$$f(x)\coloneq\E[Y\given X_{\SB(Y)}=x_{\SB(Y)}, \bar{X}_i=\bar{x}_i].$$

Since $(Y, \bar{X}_i)$ is jointly Gaussian conditionally on $X_{\SB(Y)}$,
\[
f(X)
= \E[Y\given X_{\SB(Y)},\bar{X}_i]
= f_{\SB(Y)}(X)
  + \frac{b_{i,Y} \sigma^2}{b_{i,Y}^2 \sigma^2+1}
    \bigl(\bar{X}_i-\E[\bar{X}_i\given X_{\SB(Y)}]\bigr),
\]
where $b_{i,Y} \sigma^2 /(b_{i,Y}^2 \sigma^2+1)$ is nonzero. Using \eqref{eq:eps_indep}, the residual $\bar{X}_i - \E[\bar{X}_i \given X_{\SB(Y)}]$ has variance at least $\Var(\varepsilon_i) = 1$. Hence, $f(X) \neq f_{\SB(Y)}(X)$ with positive probability under
every $\P_e$. Applying Lem.~\ref{lem:bayes_restricted} under $\P_e$ with
$W=(X_{\SB(Y)}, \bar{X}_i)$ and $Z=f_{\SB(Y)}(X)$ yields
$R_e(f_{\SB(Y)})>R_e(f)$ for all $e\in\Env$ (thus, (iii) holds).
\end{proof}

\subsection{Proof of Prop.~\ref{prop:adversarial}}
\label{app:adversarial}

\begin{proof}[Proof of Prop.~\ref{prop:adversarial}]
Write $x = (x_S, x_N)$ with $x_S \in \mathcal{X}_S$ and
$x_N \in \mathcal{X}_N$. Let $f\colon \mathcal{X} \to \mathcal{A}$ be
an arbitrary measurable predictor and $\Q \in \mathcal{Q}_S$. We construct a distribution $\tilde\Q \in \mathcal{Q}_S$ on $\mathcal{X} \times \mathcal{Y}$ by defining it as the product measure
$$
  \tilde\Q  \coloneq  \Q_{(X_S, Y)} \otimes \Q_{X_N},
$$
where $\Q_{(X_S, Y)}$ and $\Q_{X_N}$ denote the marginal distributions
of $(X_S, Y)$ and $X_N$ under $\Q$, respectively. We verify
$\tilde\Q \in \mathcal{Q}_S$:
\begin{enumerate}[label=(\roman*)]
  \item The marginal distribution of $(X_S, Y)$ under $\tilde\Q$ equals $\Q_{(X_S,Y)}$,
        so $\E_{\tilde\Q}[Y \given X_S] = \E_\Q[Y \given X_S]
        = f_S(X_S)$ almost surely.
  \item Since $\Q_X \ll \mu$ and $\mu = \mu_S \otimes \mu_N$, we have
        $\Q_{X_S} \ll \mu_S$ and $\Q_{X_N} \ll \mu_N$, hence
        $\tilde\Q_X = \Q_{X_S} \otimes \Q_{X_N} \ll \mu$.
\end{enumerate}
By Tonelli's theorem,
$$ \E_{\tilde\Q} [\ell(Y, f(X))] = \int_{\mathcal{X}_N} \left( \int_{\mathcal{X}_S \times \mathcal{Y}} \ell(y, f(x_S, x_N)) \, d\Q_{(X_S, Y)}(x_S, y) \right) d\Q_{X_N}(x_N). $$
For each fixed $x_N \in \mathcal{X}_N$, define $g_{x_N}: \mathcal{X}_S \to \mathcal{A}$ by $g_{x_N}(z) \coloneq f(z, x_N)$. The inner integral can be written as
$$\int_{\mathcal{X}_S \times \mathcal{Y}} \ell(y, f(x_S, x_N)) \, d\Q_{(X_S, Y)}(x_S, y) = \E_{(X_S, Y) \sim \Q} \Big[ \ell\big(Y, g_{x_N}(X_S)\big) \Big].$$
For every fixed $x_N$, $g_{x_N}(X_S)$ is $\sigma(X_S)$-measurable. Applying
Lem.~\ref{lem:bayes_restricted} under $\Q$ with $W = X_S$ and
$Z = g_{x_N}(X_S)$, and using Condition~(i),
\begin{equation*}
  \E_{(X_S,Y) \sim \Q} \Big[\ell \big(Y,\, g_{x_N}(X_S)\big)\Big]
   \geq 
  \E_{(X_S,Y) \sim \Q} \Big[
    \ell \big(Y,\, f_S(X_S)\big)
  \Big].
\end{equation*}
The right-hand side does not depend on $x_N$. Integrating over $x_N$ yields
\begin{equation*}
  \E_{\tilde\Q} \big[\ell(Y, f(X))\big]
   \geq 
  \E_\Q \big[\ell(Y, f_S(X_S))\big].
\end{equation*}
Since $\tilde\Q \in \mathcal{Q}_S$,
$$
  \sup_{\Q' \in \mathcal{Q}_S}
    \E_{\Q'} \big[\ell(Y, f(X))\big]
   \geq 
  \E_{\tilde\Q} \big[\ell(Y, f(X))\big]
   \geq 
  \E_\Q \big[\ell(Y, f_S(X_S))\big].
$$
As $\Q \in \mathcal{Q}_S$ was arbitrary, the right-hand side can be
replaced by its supremum over $\mathcal{Q}_S$. Since
$f$ was arbitrary, $f_S$ minimizes the worst-case risk.
\end{proof}

\appsection{Additional details for the synthetic experiments}
\label{app:synthetic_details}

We now provide more details on the synthetic experiments described in \cref{sec:synthetic_experiments}. We describe the data-generating process (\cref{app:dgp}), how the leader and the follower optimize their objectives (\cref{sec:sdfds}), an additional analysis showing that predictors that are invariant in
population can be unstable under the follower's intervention at finite sample
sizes, with this instability shrinking as the leader receives more training data (\cref{app:synthetic_regression_error}), and implementation details (\cref{app:synthetic_implementation}).

\subsection{Data-generating processes}
\label{app:dgp}

All synthetic experiments 
described in \Cref{sec:synthetic_experiments}
use the graph in \cref{fig:synthetic_dag}.
The parent set and stable blanket are
\[
    \PA(Y)=\{X_1,X_2\},
    \qquad
    \SB(Y)=\{X_1,X_2,X_3\}.
\]
The all-variable predictor uses $\{X_1,X_2,X_3,X_4,X_5\}$.
The follower can intervene on $X_1$ and $X_4$
by adding perturbations
$E_{\delta_1}$ and $E_{\delta_4}$,
which are functions of some of the random variables (detailed below) satisfying
$\max(|E_{\delta_1}|,|E_{\delta_4}|)
\leq b$, where
$b$ is the intervention
bound.

For the nonlinear SCM, we draw independent standard Gaussian noises and generate
\begin{align*}
    X_2 &= \varepsilon_2,\\
    X_1 &= 0.8\tanh(X_2)+0.6X_2+0.3\varepsilon_1+E_{\delta_1}(X_2,\varepsilon_1),\\
    Y &= \sin(X_1)+0.8X_2+0.5X_1X_2+0.4\varepsilon_Y,\\
    X_3 &= \tanh(1.2Y)+0.45Y^2+0.5\varepsilon_3,\\
    X_4 &= 2.1\tanh(Y)+0.9Y+0.55X_2+0.15YX_2+0.2\varepsilon_4
          +E_{\delta_4}(Y,X_2,\varepsilon_4),\\
    X_5 &= 1.5\tanh(X_4)+0.75X_4+0.35X_4^2+0.2\varepsilon_5.
\end{align*}
For the linear-Gaussian SCM, we use
\begin{align*}
    X_2 &= \varepsilon_2,\\
    X_1 &= 1.1X_2+0.3\varepsilon_1+E_{\delta_1}(X_2,\varepsilon_1),\\
    Y &= X_1+0.8X_2+0.4\varepsilon_Y,\\
    X_3 &= 1.2Y+0.5\varepsilon_3,\\
    X_4 &= 1.4Y+0.55X_2+0.2\varepsilon_4+E_{\delta_4}(Y,X_2,\varepsilon_4),\\
    X_5 &= 1.3X_4+0.2\varepsilon_5.
\end{align*}
In the additional action-parent experiment, the $X_4$ perturbation is allowed to
depend on $(Y,X_2,\varepsilon_4,X_1,X_3)$ instead of only
$(Y,X_2,\varepsilon_4)$.

\subsection{Leader's and follower's optimization}
\label{sec:sdfds}

For each subset $S\in\{\PA(Y),\SB(Y),\{1,\ldots,5\}\}$, the leader trains a
predictor $f_S$. In the linear-Gaussian SCM, we use the closed-form population
linear regression function $\mathbb E[Y\given X_S]$. In the nonlinear SCM, we fit
an MLP regressor on the leader's training data 
and evaluate it after the follower's
intervention.

The follower's intervention is parameterized by neural networks for
$E_{\delta_1}$ and $E_{\delta_4}$, with a final $\tanh$ layer enforcing the intervention
bound. We consider two follower objectives:
\begin{align*}
    \text{MSE:} \qquad
    &\max_{E_{\delta_1},E_{\delta_4}} \; \mathbb E\big[(Y-f_S(X_S))^2\big],\\
    \text{prediction:} \qquad
    &\min_{E_{\delta_1},E_{\delta_4}} \; \mathbb E\big[f_S(X_S)\big].
\end{align*}
After optimizing the follower objective, we always report the leader's
deployment MSE
\[
    \mathbb E\big[(Y-f_S(X_S))^2\big]
\]
under the induced intervention distribution.

\subsection{Finite-sample regression error}
\label{app:synthetic_regression_error}
The theory (for example, \Cref{thm:PA_vs_SB_game} and \Cref{thm:worst_case}) 
considers
population predictors such as
$f_S(x_S)=\mathbb E[Y\given X_S=x_S]$. For an invariant set $S$, this conditional
mean is the same in all environments. In the finite-sample nonlinear experiment (see \Cref{app:dgp}),
however, the leader deploys an estimate
\[
    \hat f_S = f_S + \Delta_S,
\]
where $\Delta_S$ is the regression error. Under squared loss,
\begin{align*}
    R_e(\hat f_S)
    &= \mathbb E_e\big[(Y-\hat f_S(X_S))^2\big]\\
    &= R_e(f_S)
       + \mathbb E_e\big[\Delta_S(X_S)^2\big],
\end{align*}
where the cross term vanishes because $f_S=\mathbb E_e[Y\given X_S]$ and by the law of iterated expectations.
Thus, even if the population predictor is invariant, the finite-sample risk can
change across interventions because the follower changes the marginal
distribution of $X_S$. In particular, the follower can move probability mass
toward regions where $|\Delta_S|$ is large. As the leader's training sample size grows,
$\Delta_S$ decreases, so this additional instability becomes smaller for invariant
sets such as $\PA(Y)$ and $\SB(Y)$.
This mechanism does not explain away the instability of the all-variable
predictor. The all-variable set contains $X_4$ and $X_5$, which lie in the
descendant region of the intervened child of $Y$. Consequently, the relevant
conditional relation may change under the follower's intervention, and therefore an increase
in the leader's training sample size does not necessarily make the all-variable predictor more stable:
\cref{fig:synthetic_appendix} confirms this behavior. In the train-size
sweep, the deployment MSE of the parent and stable-blanket predictors becomes
flatter as the leader receives more data, whereas the all-variable predictor
remains sensitive to the intervention bound. The same figure also shows that
allowing the $X_4$ perturbation to depend additionally on $X_1$ and $X_3$ leaves the
qualitative comparison unchanged.

\begin{figure}[t]
    \centering
    \includegraphics[width=1\linewidth]{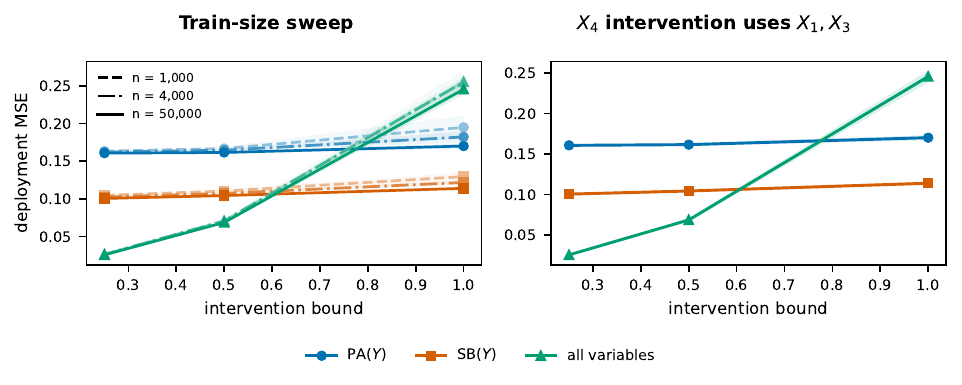}
    \caption{Additional nonlinear SCM results under follower perturbation.
    Left: train-size sweep. 
Increasing the leader's sample size ($n \in \{1{,}000, 4{,}000, 50{,}000\}$)
    reduces the residual instability of $\PA(Y)$ and $\SB(Y)$, consistent with a decrease in
    finite-sample regression error. Right: allowing the $X_4$ intervention to
    additionally use $X_1$ and $X_3$ does not qualitatively change the results.}
    \label{fig:synthetic_appendix}
\end{figure}

\subsection{Implementation details of the leader's predictor} \label{app:synthetic_implementation}

In the linear-Gaussian SCM, the conditional expectation $\mathbb{E}[Y \given X_S]$ is computed in closed form from the SCM's joint covariance. In the nonlinear SCM, $\hat{f}_S$ is an MLP ($5$ hidden layers of width $256$, ReLU) trained by Adam \citep{kingmaAdamMethodStochastic2015} (learning rate $10^{-3}$, weight decay $10^{-5}$, batch size $512$) on $50{,}000$ training observations, with early stopping determined
on a $3{,}000$-observation validation set (patience $20$, up to $500$ epochs).
 
Each follower perturbation $E_{\delta_j}$ ($j \in \{1, 4\}$) is an MLP ($2$ hidden layers of width $64$, ReLU) followed by $b \cdot \tanh(\cdot)$ to enforce $|E_{\delta_j}| \leq b$; its inputs are as specified in \Cref{app:dgp}. With $\hat{f}_S$ frozen, $E_{\delta_1}$ and $E_{\delta_4}$ are optimized jointly by Adam (learning rate $10^{-3}$) for $2{,}000$ steps. Each step computes the follower's objective on a fresh batch of $512$ observations
drawn from the SCM with the current $E_{\delta_1}, E_{\delta_4}$ and backpropagates through the SCM forward pass. We use $5$ random restarts and pick the parameters with the best objective value on $10{,}000$ held-out observations.

Each curve averages $10$ independent runs with shaded $95\%$ confidence intervals; the reported deployment MSE is computed on $10{,}000$ 
fresh observations under the selected $E_{\delta_1}, E_{\delta_4}$. The experiments are run on an Apple M4 Pro CPU ($14$ cores) using $10$ parallel workers.

\appsection{Additional details for the Causal Chambers experiments} \label{app:additional_details_causal_chambers}

We now provide more details on the experiment described in \cref{sec:new_chamber_experiments}.
We describe the physical device 
and its variables (\Cref{app:causal_chambers_device}), the data-generating process (\Cref{app:causal_chambers_data}), the hidden-confounding hypothesis (\Cref{app:hidden_confounding}), implementation details for predictors deployed by the leader (\Cref{app:implementation_details_leader}), implementation details for the follower (\Cref{app:implementation_details_follower}), and evaluation details (\cref{app:evaluation_causal_chambers}).

\subsection{Causal Chambers: Physical device and variables} \label{app:causal_chambers_device}

We collect data from the Causal Chambers by \citet{gamellaCausalChambersRealworld2025} using the \texttt{causalchamber} API.\footnote{\url{https://github.com/juangamella/causal-chamber-package}} We use the light tunnel \texttt{lt\_mk2\_standard} containing an RGB light source (intensities $\mathtt{red}$, $\mathtt{green}, \mathtt{blue}$), two linear polarizers at adjustable angles ($\mathtt{pol\_1}, \mathtt{pol\_2}$), and three pairs of infrared and visible-light sensors ($\mathtt{ir\_j}$, $\mathtt{vis\_j}$ for $\mathtt{j} \in \{1, 2, 3\}$) at different positions along the tunnel. Sensors at positions $\mathtt{j} = 1, 2$ are unaffected by the polarizers, while sensors at position $\mathtt{j}=3$ are affected by the polarizers through Malus' law.\footnote{The intensity of light passing through both polarizers is proportional to $\cos^2(\mathtt{pol\_1} - \mathtt{pol\_2})$.} An infrared LED placed above the sensors at position $\mathtt{j} = 3$ ($\mathtt{led\_3\_ir}$) activates only during the measurement of $\mathtt{ir\_3}$ and $\mathtt{vis\_3}$, and does not affect the sensors at positions $\mathtt{j} = 1, 2$. \cref{fig:light_tunnel} shows a schematic of the chamber. Tab.~\ref{tab:variables} lists all variables relevant to our experiments. We refer to \citet{gamellaCausalChambersRealworld2025} for the complete documentation.
\begin{figure}[t]
    \centering
    \includegraphics[width=1.0\linewidth]{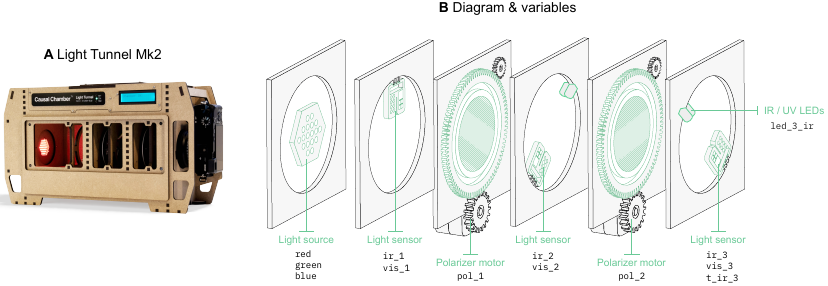}
    \caption{Overview of the Causal Chamber light tunnel (\texttt{lt\_mk2\_standard}). (A) Picture of the hardware. (B) Diagram of the hardware components. The displayed variables are relevant for our experiment. We use different subsets of them for classification tasks. Some are not used as covariates but as intervention targets to generate the different environments. Figure adapted from \citet{gamellaCausalChambersRealworld2025}, licensed under CC BY 4.0.}
    \label{fig:light_tunnel}
\end{figure}
\begin{table}[t]
    \centering
    \caption{Causal Chamber light tunnel variables relevant for our experiments. %
    }
    \label{tab:variables}
    \small
    \begin{tabular}{@{}llp{8.5cm}@{}}
        \toprule
        \textbf{Variable} & \textbf{Range} & \textbf{Description} \\
        \midrule
        $\mathtt{red}$, $\mathtt{green}$, $\mathtt{blue}$ & $\{0, \dots, 255\}$
            & Brightness settings for the main light source LEDs. \\
        $\mathtt{ir\_j}$ ($\mathtt{j} \in \{1,2,3\}$) & $[0, 65535]$
            & Uncalibrated infrared intensity measured by sensor at pos.\ $\mathtt{j}$. \\
        $\mathtt{vis\_j}$ ($\mathtt{j} \in \{1,2,3\}$) & $[0, 65535]$
            & Uncalibrated visible-light intensity measured by sensor at pos.\ $\mathtt{j}$. \\
        $\mathtt{pol\_1}$, $\mathtt{pol\_2}$ & $[-270, 270]$
            & Rotation angles for the first and second polarizer (degrees). \\
        $\mathtt{led\_3\_ir}$ & $\{0, \dots, 4095\}$
            & Brightness of the infrared LED placed above sensor at pos.\ $\mathtt{j}=3$. \\
        \bottomrule
    \end{tabular}
\end{table}
\subsection{Data-generating process} \label{app:causal_chambers_data}

The covariate vector is $X = (\mathtt{red},\, \mathtt{green},\, \mathtt{blue},\,  \mathtt{ir\_2},\, \allowbreak \mathtt{vis\_2},\,  \mathtt{ir\_3},\, \mathtt{vis\_3})$. The light intensities are drawn independently from normal distributions truncated to $[0,255] \subseteq \mathbb{R}$, rounded to integers:
\[
    \mathtt{red} \sim \mathcal{N}_{[0,255]}(64, 20^2),\quad
    \mathtt{green} \sim \mathcal{N}_{[0,255]}(32, 30^2),\quad
    \mathtt{blue} \sim \mathcal{N}_{[0,255]}(90, 12^2).
\]
We define $\mathtt{RGB} \coloneq (\mathtt{red}, \mathtt{green}, \mathtt{blue}) \in \mathbb{R}^3$. Data are generated in two stages.
\begin{enumerate}
    \item Target generation: The $\mathtt{RGB}$ values are sampled. The chamber measures the infrared reading $\mathtt{ir\_1}$, and the target is defined as $Y \coloneq \bm{1}\{\mathtt{ir\_1} > \tau\}$, where $\tau = 12{,}500$ corresponds approximately to the median of $\mathtt{ir\_1}$.
 
    \item Downstream interventions: Given the value of $Y$, we intervene on $\mathtt{ir\_3}$ and $\mathtt{vis\_3}$
    by setting
    \[
        \mathtt{led\_3\_ir} \coloneq \beta^{\mathrm{led}}_0 + Y \cdot \beta^{\mathrm{led}}_1,
        \qquad
        \mathtt{pol\_2} \coloneq \beta^{\mathrm{pol}}_0 + Y \cdot \beta^{\mathrm{pol}}_1,
    \]
    rounded to integers in $\{0, \dots, 4095\}$ and $\{0, \ldots, 270\}$, respectively. The coefficients $(\beta^{\mathrm{led}}_0, \beta^{\mathrm{led}}_1, \beta^{\mathrm{pol}}_0, \beta^{\mathrm{pol}}_1)$ vary across environments (Tab.~\ref{tab:interventions_d2}), parameterizing the exogenous variable $E$. The chamber then takes a second round of measurements with these updated settings (the exact values of $\mathtt{RGB}$ from the first stage are kept). Since $\mathtt{led\_3\_ir}$ and $\mathtt{pol\_2}$ influence $\mathtt{ir\_3}$ and $\mathtt{vis\_3}$, this creates an environment-dependent association between $Y$ and the downstream sensor readings at $\mathtt{j}=3$, see \cref{fig:causal_chambers_dags}.
\end{enumerate}

Tab.~\ref{tab:interventions_d2} lists the interventions corresponding to the environments. In all three training environments, $\beta^{\mathrm{led}}_0 = \beta^{\mathrm{pol}}_0 = 0$ and $\beta^{\mathrm{led}}_1, \beta^{\mathrm{pol}}_1 \geq 0$, so $Y = 1$ leads to higher (or equal) values of $\mathtt{led\_3\_ir}$ and $\mathtt{pol\_2}$ than $Y = 0$. The effects of $\mathtt{led\_3\_ir}$ (increasing $\mathtt{ir\_3}$) and $\mathtt{pol\_2}$ (decreasing $\mathtt{ir\_3}$ and $\mathtt{vis\_3}$) partially cancel, resulting in a positive correlation between $Y$ and $\mathtt{ir\_3}$, making $\mathtt{ir\_3}$ a highly predictive training feature. A choice of $\beta^{\mathrm{led}}_0, \beta^{\mathrm{pol}}_0 > 0$ and $\beta^{\mathrm{led}}_1, \beta^{\mathrm{pol}}_1 < 0$ by a follower 
could 
reverse the relationship: $Y = 1$ would yield lower values of $\mathtt{led\_3\_ir}$ and $\mathtt{pol\_2}$. A classifier making predictions based on $\mathtt{ir\_3}$ 
would 
perform badly
in such an environment. The follower's intervention class here is more restricted than in
the framework of \Cref{sec:problem_setup}. This makes the experiment a
realistic test of our framework, since in practice followers are typically
limited to a particular set of actions rather than able to intervene
arbitrarily.

\begin{table}[t]
    \centering
    \caption{%
      Interventions in different environments. The coefficients $(\beta^{\mathrm{led}}_0, \beta^{\mathrm{led}}_1, \beta^{\mathrm{pol}}_0, \beta^{\mathrm{pol}}_1)$ control the environment-dependent effect of $Y$ on $\mathtt{led\_3\_ir}$ and $\mathtt{pol\_2}$.%
    }
    \label{tab:interventions_d2}
    \small
    \begin{tabular}{@{}clcccc@{}}
        \toprule
        & $E$ & $\beta^{\mathrm{led}}_0$ & $\beta^{\mathrm{led}}_1$
            & $\beta^{\mathrm{pol}}_0$ & $\beta^{\mathrm{pol}}_1$ \\
        \midrule
        & 0  & 0 & 0 & 0 & 0 \\
        & 1  & 0 & 20 & 0 & 50 \\
        & 2  & 0 & 8 & 0 & 20 \\
        \bottomrule
    \end{tabular}
\end{table}

\subsection{Hidden confounding} \label{app:hidden_confounding}
\citet{gamellaCausalChambersRealworld2025} provide partial causal-graph
information for the light tunnel, where an edge $Z_1 \to Z_2$ means that
intervening on $Z_1$ changes the distribution of subsequent measurements of $Z_2$.
They note that the absence of an edge in their graphs does not preclude a
causal effect or confounding by unmeasured variables.
Under the DAG in \cref{fig:causal_chambers_dags} (middle) we inferred from their provided information, we would expect conditional independencies of the form $\mathtt{ir\_i} \indep \mathtt{ir\_j} \given \mathtt{RGB}$ and $\mathtt{vis\_i} \indep \mathtt{vis\_j} \given \mathtt{RGB}$ to hold by the global Markov property. Tab.~\ref{tab:hidden_conf} reports $p$-values for the corresponding null hypotheses in the training environments described in \Cref{app:causal_chambers_data}, computed with the weighted GCM test \citep{scheideggerWeightedGeneralisedCovariance2022}. The conditional independencies are rejected in the majority of cases.
We hypothesize that this is because of hidden confounding (such as background lighting) and believe the graph in \cref{fig:causal_chambers_dags} (right) is a more accurate description. Indeed, this is in correspondence with the results presented in \cref{sec:new_chamber_experiments}.
\begin{table}[t]
    \centering
    \caption{%
      $p$-values of a conditional independence test (Weighted GCM test \citep{scheideggerWeightedGeneralisedCovariance2022}) for the null hypothesis $Z_1 \indep Z_2 \given \mathtt{RGB}$. Conditional independence is rejected in the majority of cases, and in particular when pooling the data from all training environments. %
    }
    \label{tab:hidden_conf}
    \small
\begin{tabular}{llrrrr}
\toprule
 & & \multicolumn{4}{c}{$p$-value} \\
\cmidrule(lr){3-6}
$Z_1$ & $Z_2$ & $E=0$ & $E=1$ & $E=2$ & $E \in \{0,1,2\}$ \\
\midrule
$\mathtt{ir\_1}$ & $\mathtt{ir\_2}$ & $<0.001$ & $<0.001$ & $<0.001$ & $<0.001$ \\
$\mathtt{ir\_2}$ & $\mathtt{ir\_3}$ & $<0.001$ & $0.688$ & $<0.001$ & $<0.001$ \\
$\mathtt{vis\_1}$ & $\mathtt{vis\_2}$ & $<0.001$ & $<0.001$ & $<0.001$ & $<0.001$ \\
$\mathtt{vis\_2}$ & $\mathtt{vis\_3}$ & $<0.001$ & $0.081$ & $<0.001$ & $<0.001$ \\
\bottomrule
\end{tabular}
\end{table}

\subsection{Implementation details (leader)} \label{app:implementation_details_leader}

The leader trains five predictors on data pooled from the three training environments $E \in \{0, 1, 2\}$ (Tab.~\ref{tab:interventions_d2}), with $10{,}000$ observations per environment. Four predictors are random forests (\texttt{scikit-learn} \citep{scikit-learn} defaults, $100$ trees) restricted to the covariate subsets indicated by their subscript: $\hat{f}_{\mathtt{RGB}}$, $\hat{f}_{\mathtt{RGB}, \mathtt{ir\_2}}$, $\hat{f}_{\mathtt{RGB}, \mathtt{ir\_2}, \mathtt{vis\_2}}$, and $\hat{f}_\mathrm{all}$. The fifth predictor is the stabilized-classification ensemble $\hat{f}^\mathrm{SC}$ (\Cref{sec:learning_subset_predictors}). All five predictors are held fixed after training.
 
For SC, we use $\alpha_\mathrm{inv} = \alpha_\mathrm{pred} = 0.05$ without tuning, enumerate all $2^d$ ($d=7$) feature subsets, and apply no screening. Predictiveness is scored by out-of-sample binary cross-entropy on the pooled training environments, using out-of-bag predictions; the predictive cutoff is set by $250$ bootstrap samples. Invariance is tested with the \textsc{tram}-GCM test \citep{kookModelbasedCausalFeature2023}, using a Python re-implementation of the non-parametric \texttt{rangerICP} variant ported from the \texttt{tramicp} package \citep{tramicp}. All random forests inside SC (both the base classifiers and those used in the invariance test) use $100$ trees, with \texttt{scikit-learn} defaults.

All computations run in minutes on an Apple M4 Pro CPU ($14$ cores) using $10$ parallel workers; the dominant cost of the experiment is data collection from the chamber via the API.

\subsection{Implementation details (follower)} \label{app:implementation_details_follower}

For a given deployed predictor $\hat{f}$, the follower picks one environment from a fixed candidate set $\Env$ of $49$ environments. We denote this choice by $e^*(\hat{f})$ to emphasize its dependence on $\hat{f}$; different predictors can lead to different choices. Each environment is specified by a coefficient tuple $(\beta^{\mathrm{led}}_0, \beta^{\mathrm{led}}_1, \beta^{\mathrm{pol}}_0, \beta^{\mathrm{pol}}_1)$, as in \Cref{app:causal_chambers_data}. The reference environment $e_\mathrm{ref}$ has coefficients $(0, 12, 0, 30)$; the magnitudes of its slope coefficients lie between those of the training environments $E = 1$ and $E = 2$ (Tab.~\ref{tab:interventions_d2}). The remaining $48$ environments form a $6 \times 8$ grid combining six intensity scales with eight non-trivial sign patterns for the (LED, polarizer) pair: each component takes one of three patterns: switched off, the training pattern (active for $Y = 1$), or reversed (active for $Y = 0$).
 
We order the $49$ environments by their normalized $\ell_\infty$ distance to $e_\mathrm{ref}$ in the slope coefficients $(\beta^{\mathrm{led}}_1, \beta^{\mathrm{pol}}_1)$ (each component normalized by $25$ and $60$, respectively; full definition in the released code). For an intervention bound $b \in [0, 1]$, the choice set $\Env_b \subseteq \Env$ consists of the $\lceil 49 b \rceil$ environments closest to $e_\mathrm{ref}$; in particular, we set $\Env_0 \coloneq \{e_\mathrm{ref}\}$.
 
For each candidate environment $e \in \Env_b$, we draw $500$ observations of $X$ from the chamber under $e$. The follower estimates the population mean prediction $\E_e[\hat{f}(X)]$ from these observations and selects the environment $e^*(\hat{f}) \in \Env_b$ minimizing this estimate.

\subsection{Evaluation} \label{app:evaluation_causal_chambers}

After the follower's selection, we draw an independent set of $500$ observations of $X$ from $e^*(\hat{f})$. For each predictor $\hat{f}$ and each value of $b \in \{0, 0.05, 0.1, 0.25, 0.5\}$, we report on this evaluation set two quantities: (i) the shift in the population mean prediction relative to the reference, $\E_{e^*(\hat{f})}[\hat{f}(X)] - \E_{e_\mathrm{ref}}[\hat{f}(X)]$, and (ii) the leader's MSE under the chosen environment, $R_{e^*(\hat{f})}(\hat{f})$. Both are plotted against $b$ in \Cref{fig:follower_chambers}; for reference, the right panel also shows the leader's MSE under $e_\mathrm{ref}$ as a dotted horizontal line for each predictor.

%% file: dag_3.tex
\begin{tikzpicture}[>=stealth, scale=1.05]
    \node (Y) at (0,1) {$Y$};
    \node (E) at (2,1) {$E$};
    \node (X1) at (-1,0) {$X_1$};
    \node (X2) at (1, 0) {$X_2$};
    
    \draw[->] (E) -- (X2);
    \draw[->] (Y) -- (X2);
    \draw[->] (Y) -- (X1);
    \draw[->] (X2) -- (X1);
\end{tikzpicture}

%% file: dag_2.tex
\begin{tikzpicture}[>=stealth, scale=1.0]
    \node (E) at (-2.1, 0.5) {$E$};
    \node (RGB) at (0, 0) {$\mathtt{RGB}$};
    \node (Y) at (2.1, 0.5) {$Y$};
    \node (Bottom) at (0, -1) {$(\mathtt{ir\_2}, \mathtt{vis\_2})$};
    \node (Top) at (0, 1) {$\mathtt{(ir\_3, vis\_3)}$};

    \draw[->] (E) -- (Top);
    \draw[->] (RGB) -- (Top);
    \draw[->] (RGB) -- (Y);
    \draw[->] (Y) -- (Top);
    \draw[->] (RGB) -- (Bottom);
\end{tikzpicture}

%% file: dag_6.tex
\begin{tikzpicture}[>=stealth, scale=1.0]
    \node (E) at (-2.1, 0.5) {$E$};
    \node (RGB) at (0, 0) {$\mathtt{RGB}$};
    \node (Y) at (2.1, 0.5) {$Y$};
    \node (H) at (2, -0.5) {$H$};
    \node (Bottom) at (0, -1) {$(\mathtt{ir\_2}, \mathtt{vis\_2})$};
    \node (Top) at (0, 1) {$\mathtt{(ir\_3, vis\_3)}$};

    \draw[->] (E) -- (Top);
    \draw[->] (RGB) -- (Top);
    \draw[->] (RGB) -- (Y);
    \draw[->] (Y) -- (Top);
    \draw[->] (RGB) -- (Bottom);
    \draw[->] (H) -- (Top);
    \draw[->] (H) -- (Bottom);
    \draw[->] (H) -- (Y);
\end{tikzpicture}

%% file: dag_5.tex
\begin{tikzpicture}[>=stealth, scale=1.05]
    \node (X1) at (-1.5,1) {$X_1$};
    \node (Y) at (0,1) {$Y$};
    \node (X2) at (1.5,1) {$X_2$};
    \node (E) at (0,0) {$E$};

    \draw[->] (X1) -- (Y);
    \draw[->] (Y) -- (X2);
    \draw[->] (E) -- (X1);
    \draw[->] (E) -- (X2);
\end{tikzpicture}

%% file: dag_4.tex
\begin{tikzpicture}[>=stealth, scale=1.05]
    \node (E) at (-1.5,1) {$E$};
    \node (X1) at (0,1) {$X_1$};
    \node (Y) at (1.5,1) {$Y$};
    \node (X2) at (0,0) {$X_2$};

    \draw[->] (E) -- (X1);
    \draw[->] (E) -- (X2);
    \draw[->] (Y) -- (X1);
\end{tikzpicture}